\def\nousjournal{1}
\def\nousjmlr{1}
\ifdefined\nousjmlr
\documentclass[twoside,11pt]{article}

\usepackage[preprint]{jmlr2e}
\usepackage{times}
\ShortHeadings{Nous: Certifying Memory Decisions}{Singh}
\firstpageno{1}
\editor{Not assigned}
\else
\documentclass{article}
\ifdefined\nousjournal
\usepackage[margin=1in]{geometry}
\ifdefined\nousijar
\usepackage[numbers,square,sort&compress]{natbib}
\else
\usepackage[authoryear,round]{natbib}
\fi
\usepackage{times}
\else
\usepackage{iclr2027_conference,times}
\fi
\usepackage{amsthm}
\fi
\usepackage{amsmath,amssymb,booktabs,graphicx,microtype}
\usepackage{hyperref}
\hypersetup{hidelinks}
\ifdefined\nousjournal
\hypersetup{pdfauthor={Pranav Singh},
pdftitle={Nous: Learning and Certifying Memory Decisions Before Source Calibration},
pdfsubject={Preprint revision prepared locally. Not uploaded}}
\fi
\ifdefined\nousjmlr
\else
\newtheorem{theorem}{Theorem}
\newtheorem{proposition}[theorem]{Proposition}
\fi
\newcommand{\E}{\mathbb E}
\newcommand{\Prob}{\mathbb P}
\newcommand{\ind}{\mathbf 1}
\newcommand{\KL}{\operatorname{KL}}
\newcommand{\TV}{\operatorname{TV}}
\newcommand{\sgn}{\operatorname{sign}}

\ifdefined\noussupplement
\title{Nous: Learning and Certifying\\Memory Decisions Before Source Calibration\\Supplementary Material}
\else
\title{Nous: Learning and Certifying\\Memory Decisions Before Source Calibration}
\fi
\ifdefined\nousjournal
\ifdefined\nousjmlr
\author{\name Pranav Singh \email 2023mcb1308@iitrpr.ac.in\\
\addr Department of Mathematics\\
Indian Institute of Technology Ropar\\
Bara Phool, Birla Seed Farms\\
Rupnagar, Punjab 140001, India}
\else
\author{Pranav Singh\\Department of Mathematics\\Indian Institute of Technology Ropar\\Rupnagar, Punjab, India\\
\texttt{2023mcb1308@iitrpr.ac.in}}
\fi
\ifdefined\nousijar
\date{October 1, 2026 -- IJAR submission draft}
\else
\ifdefined\nousjmlr
\date{October 2, 2026 -- JMLR research draft}
\else
\date{October 2, 2026 -- journal working draft}
\fi
\fi
\else
\author{Anonymous authors}
\fi
\begin{document}
\raggedbottom
\maketitle
\ifdefined\noussupplement
\else
\begin{abstract}%
Agent memory systems update state decisions from reports whose reliability may
be unknown. Existing analyses of source estimation do not determine when a
policy can be learned or its improvement certified without identifying the
reporting channel. We study these three tasks using the same observed records.
For a specified hidden Markov family with continuous source uncertainty,
decision learning and powered certification have quadratic sample complexity,
whereas fixed-precision source estimation has quartic complexity. We
characterize a sharp identified interval for policy gain under an unknown
shared-background channel and derive finite-sample certificates under bounded
history dependence and conditional copying. Independently trained witness
regions support general history spaces, and disagreement-conditioned auditing
improves power for sparse revisions. For dependent histories,
prediction-count-preserving batches cancel the unknown reporting background
and admit conditional certificates. A MultiWOZ~2.4 evaluation uses
text-processing policies on 1,000 human-written test dialogues with simulated
audits. Balanced batches retain 2.26 percentage points of the full candidate's
7.34 percentage-point mean gain and obtain more positive certificates under
weak audits. These results establish task-specific information requirements
and provide an auditable policy-revision framework for Nous.
\end{abstract}
\ifdefined\nousjournal
\ifdefined\nousjmlr
\begin{keywords}
agent memory, partial identification, noisy labels, policy evaluation,
hidden Markov models
\end{keywords}
\else
\noindentKeywords: agent memory, partial identification, noisy labels,
policy evaluation, hidden Markov models.
\fi
\fi

\section{Introduction}
Agent memory systems infer current attributes from reports that may be noisy,
stale or copied. A replacement memory policy requires evidence that its state
decisions improve on those of the deployed policy. Nous represents each entity
attribute by a categorical belief state, called a Dimension, and updates that
state using an observational report, called a Delta \citep{nous}.

Different statistical targets can have different estimation rates
\citep{donoholiu,honguyen}. Hidden Markov model (HMM) analyses establish
difficult source estimation near independence and distinguish prediction
stability from parameter recovery \citep{abraham,gassiat}. These results do
not determine the joint requirements for learning a memory policy and
certifying its improvement from the same observation law. Noisy-label and
partial-identification analyses characterize uncertainty about clean labels
\citep{scott,katzsamuels,maiapolo,guerdan}. A biased audit can nevertheless
favor a less accurate policy when that policy predicts the preferred label
more frequently.

We distinguish decision learning, source calibration and revision
certification. Decision learning selects a current-state prediction policy.
Source calibration estimates numerical source informativeness. Revision
certification tests whether a candidate improves expected accuracy over the
incumbent, which is the currently deployed policy. Powered certification also
requires a specified probability of accepting a beneficial candidate.
Our objective is to determine which of these tasks require source identification
and to construct revisions that can be evaluated without it.

Figure~\ref{fig:receipt-workflow} presents the proposed workflow. We learn a
candidate from earlier observations and fix the comparison before obtaining
fresh audit reports. Population certification uses channels compatible with
the observed report distribution, meaning channels that can produce that
distribution. Witness regions, which are sets selected using history information
alone, bound the unknown reporting background. For an available batch of
revisions, we instead preserve predicted class counts to cancel that background.
Nous records the resulting policy-level certificate separately from observational
updates.

Our contributions are as follows:
\begin{enumerate}
\item We establish matching learning and powered-certification rates on a
continuous-source HMM family, with quadratically greater sample requirements
for source estimation (Theorem~\ref{thm:shared}). All tasks observe the same
full records.
\item We characterize the attainable interval of clean policy gains under
a categorical shared-background channel (Theorem~\ref{thm:sharp}). The sign
criterion does not require histories containing only one clean class.
\item We derive finite-sample certificates under bounded history dependence
and conditional copying (Section~\ref{sec:certificate}). Split-trained witnesses
have power guarantees, and disagreement-conditioned auditing addresses sparse
policy changes.
\item We characterize background cancellation by prediction-count conservation
and construct balanced revision batches (Section~\ref{sec:batch}). Conditional
audits certify the selected batch when its histories are dependent.
\end{enumerate}

We evaluated temporal inference in a mutable-state experiment, memory-dependent
actions in an external simulator, and text-processing policies on human-written
MultiWOZ dialogues. The experiments measured certification frequency alongside
the accuracy gains retained by batch selection. The following sections present
the evidence model, statistical results, experiments and limitations.

\section{Memory Decisions and the Evidence Contract}
Let $K$ be the number of state classes. Let $Z\in\{1,\ldots,K\}$ be an exclusive state, $H$ the information available
when memory is queried, and $Y$ a later report. The incumbent $b(H)$ and
candidate $g(H)$ are fixed before calibration. Define clean and observable gains
\begin{align}
 \Delta&=\E[\ind\{g(H)=Z\}-\ind\{b(H)=Z\}],\label{eq:gain}\\
 \Delta_Y&=\E[\ind\{g(H)=Y\}-\ind\{b(H)=Y\}].\label{eq:proxy}
\end{align}
The target is $\Delta>0$ on a specified deployment population. We call the
resulting policy-level certificate a revision receipt. Sources within a history
may be dependent. Population certificates use independent and identically
distributed (iid) calibration records from the deployment law.
Section~\ref{sec:batch} conditions on a fixed batch and permits dependent
histories under a conditional report-channel contract.

\subsection{Nous Integration}
A Dimension retains the categorical belief state. The native Bayesian updater
generates a baseline candidate. An adapter records the incumbent state and
associates the candidate distribution and policy identifier with that state.
The adapter applies or declines the proposal according to its revision receipt.
It rejects stale states and mismatched policy identifiers. A separate policy
update records this operation because an accuracy certificate does not estimate
the reliability of an observational Delta. The production updater and
persistence format remain unchanged.

\subsection{Information Boundaries}
We fit and select policies using training and development data. Calibration
data evaluate the fixed comparison, and clean test labels remain available only
to the evaluator. A later report can concern a future state, as in the learning
theorem, or a specified current-state version, as in the revision experiments.
These channels have different targets. Exact-channel experiments generate
current-state audits independently of the policy's answer with constant noise
parameters. Misspecification experiments introduce bounded candidate copying.
The tested policies do not receive their future audit reports or clean labels.

Figure~\ref{fig:receipt-workflow} connects this information boundary to the
two receipt targets. Population certification audits an iid sample from the
deployment law. Batch certification audits a preselected set of revisions.
Policy, witness and batch selection precede the fresh certification reports.

\begin{figure}[tbp]
\centering
\includegraphics[width=\linewidth]{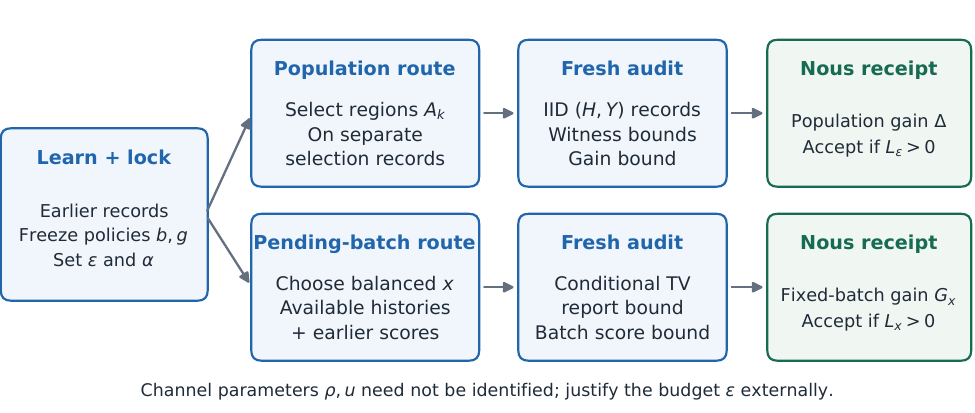}
\caption{Population and fixed-batch revision certification in Nous.
IID denotes independent and identically distributed records.}
\label{fig:receipt-workflow}
\end{figure}

\section{Learning and Certification before Source Calibration}\label{sec:matched}
Fix $0<\ell\leq1/8$. Each observation is the full record
$O=(X_0,X_1,Y)$, where $X_t\in\{-1,+1\}^3$ and $Y=X_{2,1}$.
The target is $Z_1\in\{-1,+1\}$. Records are independent three-step
realizations. Let
$\mathcal M_\ell$ contain stationary balanced binary HMMs with
\begin{equation*}
 \Prob(Z_{t+1}=z'\mid Z_t=z)=(1+\lambda zz')/2,
 \qquad \lambda\in[4\ell/9,\ell].
\end{equation*}
Vector emissions are independent across time conditional on the latent path,
have strictly positive support, and satisfy $f_-(x)=f_+(-x)$. Dependence among
the three simultaneous sources is unrestricted. Anchor informativeness
$a=\E[X_{t,1}\mid Z_t=+1]$ lies in $[1/4,4/5]$. Its known positive orientation
fixes latent-label ambiguity. Both learners know this class and receive the
same $O$, without knowing its parameters.

\subsection{Shared Observation Mechanism}\label{sec:unified}
The central construction uses the same future report for learning and auditing,
without an additional pinned-audit sensor.
Let $s\in\{-1,1\}$, $c\in[1,3/2]$, $\chi(x)=x_1x_2x_3$, and
\begin{equation*}
 \eta_s(x)=\tfrac14x_1+\tfrac38s\chi(x),\quad
 f_z(x)=\tfrac18[1+zc\eta_s(x)],\quad \lambda=\ell/c^2.
\end{equation*}
These worlds belong to $\mathcal M_\ell$. Their source informativeness
is $a=c/4$. All procedures observe the full $O=(X_0,X_1,X_{2,1})$.
\begin{theorem}[Learning and certification precede source calibration]\label{thm:shared}
For fixed failure probabilities below $1/4$ and excess-risk tolerance below $3/128$,
Bayes decision learning on this family requires $\Theta(\ell^{-2})$ records.
Learning $g_s=s\chi(X_1)$ and certifying its improvement over the informative
incumbent $b=X_{1,1}$ using the witness certificate on a fresh split of the
same records requires $O(\ell^{-2})$ records per split. Uniform source
estimation to error below $1/16$ requires $\Theta(\ell^{-4})$ total records,
even if $s$ is given and all splits are pooled.
Any uniformly level-$\alpha$ test for the fixed proposal $g_+$, with power at
least $1-\eta$ in its beneficial world, requires $\Omega(\ell^{-2})$ records when
$\alpha+\eta<1$ are fixed. Hence fixed-candidate certification has matching rates.
\end{theorem}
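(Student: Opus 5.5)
The plan is to reduce every claim to one of two computations on the family $\{P_{s,c}\}$: the $\chi^2$ (or KL) divergence between record laws, and the bias and variance of simple moment statistics. First, the marginals. Because $f_-(x)=f_+(-x)$, the prior is balanced, and $\eta_s$ is odd in $x$, each single-time marginal $\tfrac12(f_++f_-)=1/8$ is uniform. Hence a single $X_t$ carries no information about $(s,c)$.

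Next, I expand the joint law of $O=(X_0,X_1,X_{2,1})$ in the latent correlations. Conditional on the path, the emissions factor, so
\begin{equation*}
P_{s,c}(O)=\prod_t 8^{-1}\Bigl(1+\sum_{\emptyset\neq S} c^{|S|}\E\Bigl[\textstyle\prod_{t\in S}Z_t\Bigr]\prod_{t\in S}\eta_s(X_t)\Bigr),
\end{equation*}
with the factor for $t=2$ marginalised to $X_{2,1}$. The path moments are $\E[Z_tZ_{t+1}]=\lambda$, $\E[Z_0Z_2]=\lambda^2$, and $\E[Z_0Z_1Z_2]=0$ by sign symmetry. The adjacent-pair coefficient is therefore $c^2\lambda=\ell$, independent of $c$, and the first $c$-dependence is $c^2\lambda^2=\ell^2/c^2$ on the $(0,2)$ pair. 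The $s$-dependence, by contrast, enters already at order $\ell$ through the $\chi(X_0)\chi(X_1)$ and $X_{0,1}\chi(X_1)$ cross terms of $\eta_s(X_0)\eta_s(X_1)$.

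Since the base measure is uniform and the perturbations are bounded, $\chi^2(P_{+,c},P_{-,c})=\Theta(\ell^2)$. In contrast, $\sup_{c,c'}\chi^2(P_{s,c},P_{s,c'})=O(\ell^4)$, because the order-$\ell$ terms coincide exactly across $c$.

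Lower bounds then follow from Le Cam's two-point method with tensorised $\chi^2$/Hellinger distances, giving $n$ records total variation at most $\sqrt{n}\cdot O(\ell)$ or $\sqrt n\cdot O(\ell^2)$. For decision learning, I will check that the Bayes rule in world $s$ is $s\chi(X_1)$. The posterior log-odds equal $\log\frac{1+c\eta_s(X_1)}{1-c\eta_s(X_1)}$ plus neighbour terms of size $O(\lambda c)\le O(\ell)\le 1/8\cdot O(1)$. Meanwhile $|\eta_s|\ge 3/8-1/4=1/8$ has sign $s\chi$, so the margin dominates. Using the wrong sign costs a constant excess risk, which I expect to be at least $3/128$ from $c\ge1$ and $|\eta|\ge1/8$. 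This is where the tolerance constant comes from, and it is the step to verify most carefully. A learner with excess risk below $3/128$ thus identifies $s$, so $n=\Omega(\ell^{-2})$.

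For source estimation, take $c=1$ versus $c=3/2$, so that $a$ moves by $1/8>2\cdot 1/16$. This forces $n=\Omega(\ell^{-4})$ even with $s$ known and all splits pooled. For the testing bound, the null world is $P_{-,c}$: there $g_+=-g_-$ has accuracy below the incumbent $X_{1,1}$, so $\Delta\le0$. The alternative is $P_{+,c}$, where $\Delta>0$. A level-$\alpha$, power-$(1-\eta)$ test needs $\TV(P_{+}^{\otimes n},P_{-}^{\otimes n})\ge 1-\alpha-\eta$, which yields $n\ge C(1-\alpha-\eta)^2\ell^{-2}$.

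Upper bounds use explicit statistics. To learn $s$, I take the sign of the empirical mean of $\chi(X_1)\chi(X_0)$, whose expectation is $s$ times a positive multiple of $c^2\lambda(3/8)^2$, which is of order $\ell$, with $O(1)$ variance. Hoeffding then gives $O(\ell^{-2}\log(1/\delta))$ records for correct sign recovery. For certification on the fresh split, I will apply the witness certificate stated later in Section~\ref{sec:certificate} to the proxy gain $\Delta_Y$ with $Y=X_{2,1}$. The point is that $\E[(g_s-b)(X_1)\,X_{2,1}]=\lambda a\,\E[Z_1(g_s-b)(X_1)]$, which is $\Theta(\ell)\cdot\Delta$ with a known sign relation, because $Y$ depends on $X_1$ only through $Z_1$ and $\E[Z_2\mid Z_1]=\lambda Z_1$. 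The shared-background term cancels since $\E[X_{2,1}\mid Z_1]$ is odd in $Z_1$. Hoeffding again gives $O(\ell^{-2})$ records for a powered one-sided test.

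For source estimation, the upper bound uses $\widehat m=\frac1n\sum X_{0,1}X_{2,1}$, whose mean is $a^2\lambda^2=\ell^2/(16c^2)$. This mean is monotone in $c$, so inverting $\widehat m$ with accuracy $O(\ell^2)$ gives $a$ to within $1/16$ from $O(\ell^{-4})$ records.

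The main obstacle is the second-order bookkeeping for the upper bound on $\chi^2(P_{s,c},P_{s,c'})$. I need to show that every order-$\ell$ term, and every order-$\ell^2$ term other than those depending on $c$ only through $\ell^2/c^2$, cancels or coincides between $c$ and $c'$. It must also hold uniformly once $X_2$ is marginalised to its anchor coordinate. I would attack this by writing the record density as uniform times $(1+\ell A+\ell^2B_c+O(\ell^3))$ and checking that $A$ is $c$-free, using $c^2\lambda=\ell$ together with the fact that the $X_{2,1}$ marginal retains only the $x_1$ component of $\eta$.
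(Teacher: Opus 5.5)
\textbf{Gap in the learning upper bound.} The statistic you propose cannot recover $s$. In the exact law, the $\chi(x_0)\chi(x_1)$ coefficient comes from $\ell\,\eta_s(x_0)\eta_s(x_1)$ and equals $\ell s^2B^2=\ell B^2$, with $A=1/4$ and $B=3/8$. So $\E[\chi(X_0)\chi(X_1)]=9\ell/64$ in \emph{both} orientations. The sign of its empirical mean is therefore $+$ with high probability in either world. The resulting policy $\chi(X_1)$ then has regret $cB\geq 3/8$ when $s=-1$.

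This is the latent-label ambiguity itself, not a bookkeeping slip. Two parity coordinates oriented by the same unknown $s$ are positively correlated whatever $s$ is. Only a coordinate with known orientation breaks the symmetry. The same error appears in your list of $s$-dependent order-$\ell$ terms. Your decision lower bound survives, because other terms such as $x_{0,1}\chi(x_1)$ and $y\chi(x_1)$ do carry $s$.

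The fix is to correlate $\chi(X_1)$ with an anchor coordinate. The paper uses $\E[\chi(X_1)Y]=s\ell AB=3s\ell/32$, and $\E[X_{0,1}\chi(X_1)]=s\ell AB$ works equally well. Either gives sign recovery from $O(\ell^{-2}\log(1/\delta))$ records by Hoeffding. After that, $\hat s\chi(X_1)$ is the Bayes rule by your margin argument.

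\textbf{Smaller points.} The rest of your plan follows the paper's route: exact path summation with $c^2\lambda=\ell$, Le Cam for the $s$-pair and for $c\in\{1,3/2\}$, the witness certificate on a fresh split, and inversion of $\E[X_{0,1}Y]=\ell^2/(16c^2)$. A few points need tightening.
\begin{enumerate}
\item What you call the main obstacle disappears. Your path-moment expansion is exact, so $P_{s,c}-P_{s,c'}=\ell^2(c^{-2}-c'^{-2})Ay\,\eta_s(x_0)/128$ identically. The cell floor $1/256$ then gives the $O(\ell^4)$ chi-squared bound directly.
\item The decision lower bound needs a fixed decoder valid for arbitrary, possibly randomized, output policies, not just a comparison with $-s\chi(X_1)$. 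The paper decodes by the sign of $\E_U[g\chi(X_1)]$ over the $64$ uniform histories. It uses the regret identity $\tfrac c2\E_U[(s\chi(X_1)-g)(\eta_s(X_1)+\lambda\eta_s(X_0))]$. A wrong decode leaves a parity contribution of at least $B$ against at most $A+\lambda(A+B)\le A+5/64$, which gives the $3/128$ threshold.
\item In the certificate, the witness penalty $\sum_k(\hat m_k)_+U_k$ does not use the background symmetry you invoke. It is $O(n^{-1/2})$ because $m=0$, since both policies are marginally balanced. That is the quantity Hoeffding on $\hat m_1$ must control.
\item The source separation $1/8$ equals, rather than exceeds, $2\cdot 1/16$. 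The midpoint test at $5/16$ still suffices for error strictly below $1/16$.
\end{enumerate}
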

In this family, $\Delta=c/16$, $m=0$, and $\Delta_Y=\ell/64$. Both witness regions
may be the whole history space. For each decision orientation the endpoint scales
remain calibration-hard. For each scale the orientation must be learned.
Appendix~\ref{app:shared} gives explicit learning/acceptance probabilities,
constants and both source bounds. The rates concern the minimax requirements
of this family. Joint uncertainty about source informativeness and transition
persistence produces the source-estimation lower bound. If persistence is
known, $\lambda$ and $\ell$ determine $c=\sqrt{\ell/\lambda}$.
Table~\ref{tab:shared} summarizes the observable signals and task complexities.

\begin{table}[t]
\centering
\small
\begin{tabular}{lll}
\toprule
Unknown & Observable signal in the same record $O$ & Task complexity\\
\midrule
Orientation $s$ & $\E[\chi(X_1)Y]=3s\ell/32$ & Learn: $\Theta(\ell^{-2})$\\
Policy improvement & $\Delta_Y=\ell/64$ & Certify: $\Theta(\ell^{-2})$\\
Source scale $c$ & $\E[X_{0,1}Y]=\ell^2/(16c^2)$ & Estimate: $\Theta(\ell^{-4})$\\
\bottomrule
\end{tabular}
\caption{Observable signals and sample complexities for the shared-observation family.}

\label{tab:shared}
\end{table}

\subsection{Sufficient Conditions}
Appendix~\ref{app:information-order} gives sufficient conditions on arbitrary
observation spaces. A bounded decision moment and sign-valid audit score have
order-$\ell^r$ margins. Full-law Kullback--Leibler (KL) divergence bounds have orders $\ell^{2r}$ and
$\ell^{2q}$, $q>r$. Learning and powered certification then have
$\Theta(\ell^{-2r})$ complexity, while source estimation needs
$\Omega(\ell^{-2q})$. A stable inverse moment gives the matching upper bound.
The broader class admits a uniform decision guarantee. Its calibration-hard
and decision-hard alternatives can lie in different subfamilies.
\begin{theorem}[Matched decision--calibration separation]\label{thm:matched}
For fixed excess-risk tolerance $0<\epsilon<.05$ and fixed failure probability
$0<\delta<1/4$, the minimax sample complexity of learning a current-state
decision over $\mathcal M_\ell$ is $\Theta(\ell^{-2})$. Uniformly estimating
$a$ with error strictly below $1/8$ and failure probability below $1/4$
requires $\Omega(\ell^{-4})$ records on the same class and experiment.
The decision comparator is the Bayes rule based on both snapshots.
\end{theorem}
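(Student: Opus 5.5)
The proof has three parts: an upper bound for learning over all of $\mathcal M_\ell$, a matching lower bound, and a quartic lower bound for estimating $a$. For the two lower bounds I would reuse the shared-observation subfamily of Section~\ref{sec:unified}, which lies in $\mathcal M_\ell$. This lets the learning-rate and source-rate claims of Theorem~\ref{thm:shared} supply most of the argument, with one rescaling to reach the tolerance $1/8$.

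\emph{Learning upper bound.} The Bayes rule for $Z_1$ given $(X_0,X_1)$ factors into an emission likelihood ratio at time $1$ and a transition-propagated term from $X_0$. The $X_0$ contribution is a log-odds correction of size $O(\lambda)=O(\ell)$.

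I would show that a plug-in rule loses excess risk of order the squared error of the estimated decision margin. A simpler alternative is to show that the relevant decision is determined by the sign of a bounded moment of size order $\ell$, as in Appendix~\ref{app:information-order}. The candidate moment is $\E[\phi(X_1)Y]$ with $Y=X_{2,1}$: since $a$ has known positive orientation, the future anchor report correlates with $Z_1$ through $\lambda a\geq(4\ell/9)(1/4)$.

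Consider a finite class of candidate decision functions, namely sign choices on the $8$ cells of $\{-1,+1\}^3$, with those cells possibly indexed by the $X_0$ context. Empirical risk minimization against the proxy $Y$ needs $O(\ell^{-2})$ records by Hoeffding, uniformly over the finitely many rules. The step linking proxy risk to clean risk uses $\Delta_Y=\lambda a\cdot(\text{clean gain})$. This linear relation holds because $Y$ depends on $H$ only through $Z_1$ (emissions are conditionally independent in time). A clean excess of $\epsilon$ therefore appears as a proxy margin of order $\ell\epsilon$. Since $\epsilon$ is fixed, the sample size is $O(\ell^{-2})$ for fixed $\epsilon$ and $\delta$.

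\emph{Learning lower bound.} I would take the two worlds $s=\pm1$ from Section~\ref{sec:unified} at a fixed $c$, whose Bayes rules $g_{\pm}$ disagree with excess risk above the tolerance. The plan is to bound the full-record KL divergence between them by $O(\ell^2)$ per record. The only $s$-dependent coupling to anything other than $X_1$ passes through $\lambda\cdot\tfrac38 c$, because the marginal of $X_1$ is uniform in both worlds. Le Cam's two-point bound then gives $\Omega(\ell^{-2})$. This is the $\Theta(\ell^{-2})$ learning claim of Theorem~\ref{thm:shared}, and I would cite it.

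\emph{Source lower bound.} This is the step I expect to be the main obstacle. I would use worlds $c=1$ and $c=3/2$ with $s=+1$ and $\lambda=\ell/c^2$, giving $a=1/4$ versus $a=3/8$. These differ by $1/8$, so estimating $a$ with error strictly below $1/16$ separates them.

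The statement asks only for error below $1/8$, so I would instead pick two worlds in $\mathcal M_\ell$ with $a$-values at least $1/4$ apart. One option is to generalize the construction to $\eta=\tfrac14x_1+\ldots$ with scale $c\in[1,16/5]$ and an admissible $\lambda=\ell/c^2$. I need to check that $\lambda$ stays in $[4\ell/9,\ell]$; this fails for large $c$. If it does fail, I would rescale the anchor coefficient instead, taking $a\in\{1/4,3/4\}$ and choosing $\lambda$ so that $\lambda a^2$ is fixed.

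The key computation is that every single-time marginal of $X_t$ is uniform, independent of $(c,\lambda)$. The pairwise joint law of $(X_t,X_{t'})$ depends only on $\lambda\,c\eta(x)\,c\eta(x')$, and with $\lambda c^2=\ell$ this product is matched exactly. The two worlds then differ only at third order in $\lambda$-weighted terms across the three time points, that is, in the triple correlation $\lambda^2c^3$. The density difference is $O(\ell^2)$, so the KL divergence per record is $O(\ell^4)$ by the $\chi^2$ bound with positive densities. Le Cam then gives $\Omega(\ell^{-4})$.

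The delicate part is verifying that the first- and second-order terms cancel exactly in the three-step expansion $P(x_0,x_1,x_2)=\tfrac1{8^3}\sum_{z}\ldots$. This needs the explicit product form $1+\lambda z z'$ together with symmetry, and the check must also cover the $\tfrac38\chi$ component.
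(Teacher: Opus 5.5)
Your learning upper bound is the paper's argument: empirical future-label risk minimization over all $2^{64}$ rules on the 64 two-snapshot histories, transferred to clean risk through $\E[Y\mid Z_1,H]=\lambda aZ_1$ with $\lambda a\geq\ell/9$. Commit to the full 64-cell class, since the comparator is the Bayes rule on both snapshots.

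\textbf{Main gap: the source lower bound at tolerance $1/8$.} You correctly note that the shared family only separates $a\in\{1/4,3/8\}$. It cannot do better, because $\lambda=\ell/c^2\in[4\ell/9,\ell]$ forces $c\in[1,3/2]$. Your fallback is infeasible. Holding $\lambda a^2$ fixed with $a\in\{1/4,3/4\}$ requires $\lambda_1=\lambda_0/9$, but $\mathcal M_\ell$ only allows $\lambda_0/\lambda_1\leq9/4$. This is the same persistence check you applied to your first option.

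More generally, the lag-one term $\lambda\,\eta(x)\eta(x')$ of the two laws must agree, where $\eta$ is the effective emission margin. This forces $\eta_1=k\eta_0$ with $k^2=\lambda_0/\lambda_1\leq9/4$, hence $a_1=ka_0\leq\tfrac32a_0$. Combined with $a_1-a_0\geq1/4$, this forces $a_0\geq1/2$ and $a_1\geq3/4$.

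The unscaled $\tfrac38\chi$ component cannot survive either. If only the anchor coefficient changes, $\lambda\eta(x)\eta(x')$ keeps terms such as $\lambda\tfrac{9}{64}\chi(x)\chi(x')$ that differ between the worlds at order $\ell$. The KL is then only $O(\ell^2)$. Scaling the $\chi$ component along with the anchor up to $a=3/4$ instead breaks positivity.

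The paper uses a different subfamily in which only the anchor is informative. It sets $X_{t,1}=U_t=Z_tE_t$ and $X_{t,j}=U_tB_{t,j}$ with parameter-free flips. By the chain rule, the full-record KL then equals the anchor-triple KL, with law $[1+\lambda a^2(uv+vw)+\lambda^2a^2uw]/8$. It takes $(a,\lambda)=(1/2,\ell)$ and $(3/4,4\ell/9)$. Both have lag-one coefficient $\ell/4$, and the lag-two coefficients differ by $5\ell^2/36$. This gives $\mathrm{KL}\leq25\ell^4/648$, and the test thresholds at the midpoint $5/8$.

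Also, the surviving discrepancy is this lag-two pairwise coefficient, which is $\ell^2/c^2$ in the shared family. It is not a three-time correlation $\lambda^2c^3$: the symmetric binary chain has no three-point terms.

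\textbf{Smaller issue: the decision lower bound's tolerance.} Citing Theorem~\ref{thm:shared} only covers tolerances below $3/128$, not the stated range $\epsilon<.05$. The guaranteed wrong-orientation regret is $\tfrac c2[B-A-\lambda(A+B)]$, which is only $3/128$ at $c=1$, $\ell=1/8$.

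Your two-point route still works if you fix $c=3/2$. Then $\lambda=4\ell/9\leq1/18$, so the regret is at least $\tfrac34\cdot\tfrac{13}{144}=\tfrac{39}{576}>.05$. The orientation KL bound $(9/32)\ell^2$ is unchanged.

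The paper instead uses a dedicated pair with $a=.3$, $b=.5$, $\lambda=\ell$. Its margin $b-a-\lambda(a+b)\geq.1$ gives excess at least $.05$, with $\mathrm{KL}\leq32a^2b^2\ell^2$.
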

The future anchor satisfies $\E[Y\mid Z_1,H]=\lambda aZ_1$. Thus future-label
excess risk is exactly $\lambda a$ times clean-state excess risk. There are
64 possible histories. Empirical future-label majority in each cell gives
\begin{equation}\label{eq:upper}
 R_Z(\hat g)-R_Z(g^*)\leq
 \frac{18}{\ell}\sqrt{\frac{\log(2\cdot2^{64}/\delta)}{2n}}.
\end{equation}
The learner does not estimate $a$ or $\lambda$. Decision-hard full-record laws have
$O(\ell^2)$ KL, whereas parameter-hard laws match the lag-one coefficient
and differ only at order $\ell^2$ in the lag-two coefficient, giving
$O(\ell^4)$ KL. Appendix~\ref{app:matched} gives proofs and simulations.
Theorem~\ref{thm:matched} gives a source lower bound without a matching upper
bound. Section~\ref{sec:related} relates these comparisons to HMM
parameter-estimation results.

\section{Policy-Gain Identification}\label{sec:compatibility}
Deployment reports can have asymmetric noise.
Assume the categorical report channel has the form
\begin{equation}\label{eq:channel}
 C=\rho I+u\mathbf1^{\mathsf T},\qquad
 \rho>0,\quad u\geq0,\quad \mathbf1^{\mathsf T}u=1-\rho.
\end{equation}
With probability $\rho$, the report preserves the state. Otherwise, it draws
from an unknown shared background distribution. This family contains every
positively oriented binary class-conditional channel. Define the clean posterior
$p_k(h)=\Prob(Z=k\mid H=h)$. Set $q_k(h)=\Prob(Y=k\mid H=h)$ and
$m_k=\Prob(g=k)-\Prob(b=k)$. Then
\begin{equation}\label{eq:identity}
 q(h)=\rho p(h)+u,\qquad \Delta_Y=\rho\Delta+m^{\mathsf T}u,
\end{equation}
Prediction-mass changes create an additive bias.
Ignoring observable compatibility gives the sufficient rule
$\Delta_Y>\max_k m_k$. In binary labels $0,1$, this is $\Delta_Y>|\E[g-b]|$.
It is impossible for that rule to accept an entirely one-direction change.

\begin{samepage}
\begin{theorem}[Sharp compatible-channel gain interval]\label{thm:sharp}
Suppose $H$ has finite positive-mass support and $q(h)$ is nonconstant. Write
$\ell_k^Y=\min_h q_k(h)$, so $\sum_k\ell_k^Y<1$. The full compatible family is
\begin{equation}\label{eq:box}
 0\leq u_k\leq\ell_k^Y,\quad \rho=1-\sum_k u_k,\quad
 p(h)=\frac{q(h)-u}{1-\sum_k u_k}.
\end{equation}
The attainable clean gains form the interval between the minimum and maximum of
\begin{equation}\label{eq:functional}
 F(u)=\frac{\Delta_Y-m^{\mathsf T}u}{1-\sum_k u_k}
 \quad\text{over the box in \eqref{eq:box}}.
\end{equation}
In particular, every compatible law has strictly positive gain if and only if
\begin{equation}\label{eq:sharp}
 \Delta_Y>\sum_k(m_k)_+\ell_k^Y.
\end{equation}
\end{theorem}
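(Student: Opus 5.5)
The proof has three parts: characterize the compatible family exactly, map it onto the set of clean gains through the identity \eqref{eq:identity}, and then settle the sign question by optimizing the resulting ratio over a box.

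\textbf{Characterizing the compatible family.} A pair $(\rho,u)$ together with a clean posterior field $p(\cdot)$ is compatible exactly when three things hold: $q(h)=\rho p(h)+u$ for every support point $h$, each $p(h)$ lies in the simplex, and the channel has the form \eqref{eq:channel}. Summing the coordinates of $q(h)=\rho p(h)+u$ gives $1=\rho+\mathbf 1^{\mathsf T}u$. This is $\rho=1-\sum_k u_k$, and it forces $p(h)=(q(h)-u)/\rho$.

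Nonnegativity of $p_k(h)$ for all $h$ is equivalent to $u_k\le \min_h q_k(h)=\ell_k^Y$. Conversely, any $u$ in the box $0\le u_k\le\ell_k^Y$ yields a valid $p(h)$, provided $\rho>0$.

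The condition $\rho>0$ is guaranteed by showing $\sum_k\ell_k^Y<1$. Because $q$ is nonconstant, there are points $h,h'$ and a class $k$ with $q_k(h)>q_k(h')$. Since $\sum_k q_k(h)=1$, we get $\sum_k\ell_k^Y\le\sum_k q_k(h')$ with at least one strict inequality, so the sum is strictly below one. Hence $\rho\ge 1-\sum_k\ell_k^Y>0$ on the whole box.

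A joint law of $(H,Z,Y)$ is then obtained by drawing $Z\sim p(H)$ and passing it through the channel $C$. Its observed marginal of $(H,Y)$ equals the given one, and the gain quantities $m$ and $\Delta_Y$ are functions of $b,g$ and the observed law, so they are fixed.

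\textbf{Attainable gains.} For every compatible $u$, \eqref{eq:identity} gives $\Delta=(\Delta_Y-m^{\mathsf T}u)/\rho=F(u)$. So the set of attainable clean gains is exactly the image of $F$ over the box. The box is compact and convex, hence connected, and $F$ is continuous on it because its denominator stays at least $1-\sum_k\ell_k^Y>0$. The image is therefore the closed interval $[\min F,\max F]$.

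\textbf{Sign criterion.} Since $\rho>0$, we have $F(u)>0$ if and only if $\Delta_Y-m^{\mathsf T}u>0$. So every compatible gain is positive if and only if $\Delta_Y>\max_{\text{box}} m^{\mathsf T}u$. The function $m^{\mathsf T}u$ is linear and the box is a product of intervals, so it is maximized coordinatewise: take $u_k=\ell_k^Y$ when $m_k>0$ and $u_k=0$ otherwise. This gives $\max_{\text{box}} m^{\mathsf T}u=\sum_k(m_k)_+\ell_k^Y$, which is \eqref{eq:sharp}.

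The main point needing care is the completeness of the compatible family. Any compatible law must have a channel of the form \eqref{eq:channel} whose background $u$ acts on the clean posterior. The identification above uses only the marginal of $(H,Y)$ and the pointwise identity, so it does not depend on what $b$ or $g$ are. Sharpness then holds because each vertex of the box is realized by an explicit law.
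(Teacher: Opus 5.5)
Your proposal is correct and follows essentially the same route as the paper. Both arguments obtain the box from nonnegativity of $p(h)$ together with an explicit joint-law reconstruction, use the identity to get $\Delta=F(u)$, invoke compactness and connectedness of the box with a positive denominator to get the interval, and maximize $m^{\mathsf T}u$ coordinatewise for the sign criterion; you also justify $\sum_k\ell_k^Y<1$ from nonconstancy of $q$, which the paper takes for granted, though the strict coordinate there is one with $q_j(h)<q_j(h')$ (or simply compare against $h$, since $\ell_k^Y\leq q_k(h')<q_k(h)$).
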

\end{samepage}

Appendix~\ref{app:certificate} proves the identified interval.

Theorem~\ref{thm:sharp} does not require histories containing only one clean
class. It identifies the policy-gain interval while retaining uncertainty about
the channel. A nonpositive lower endpoint corresponds to an observationally
identical world with nonpositive gain. More records from that observation law
cannot resolve this sign ambiguity.

Figure~\ref{fig:identified-intervals} shows the possible interval positions.
The policy gain may remain positive throughout the compatible family even
when the channel parameters and numerical gain remain unidentified.
An interval crossing zero leaves the sign unresolved.

\begin{figure}[tbp]
\centering
\includegraphics[width=\linewidth]{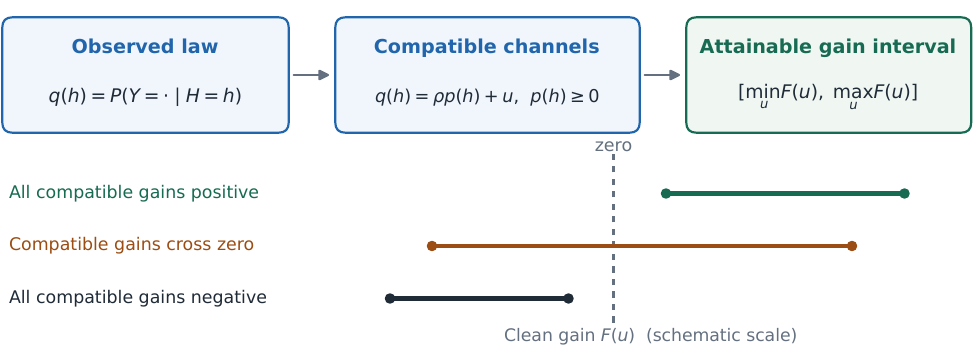}
\caption{Schematic identified intervals for clean policy gain.}

\label{fig:identified-intervals}
\end{figure}

\section{Finite-Sample Certification and Repeated Releases}\label{sec:certificate}
Estimating every conditional report probability can be expensive. Choose,
before calibration, one history-only witness region $A_k$ per class. Regions
can overlap and can omit histories. They do not require pure classes. Since
$u_k\leq\Prob(Y=k\mid H\in A_k)$, an upper bound on a regional report
frequency is also an upper bound on the unknown background mass.

For $n$ iid records compute an upper Clopper--Pearson bound $U_k$ for that
regional frequency with one-sided error $\alpha/(2K)$. Set $U_k=1$ for an
empty region. Let $\widehat\Delta_Y$ and $\hat m$ be the empirical quantities
in \eqref{eq:proxy} and \eqref{eq:identity}, and set
\begin{equation}\label{eq:certificate}
 L=\widehat\Delta_Y-\sum_k(\hat m_k)_+U_k
       -\sqrt{\frac{8\log(2/\alpha)}{n}}.
\end{equation}

\begin{theorem}[Witness certificate]\label{thm:witness}
Under \eqref{eq:channel}, fixed policies and history-only regions, and iid
calibration from the deployment law, $\Prob\{L>0\}\leq\alpha$ whenever
$\Delta\leq0$. More generally $L\leq\rho\Delta$ with probability at least
$1-\alpha$. When $L>0$, it is also a positive lower bound on $\Delta$.
The same records may be used for witness bounds and gain estimation.
\end{theorem}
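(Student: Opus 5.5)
The plan is to reduce the claim to the population identity \eqref{eq:identity} together with two uniform concentration events, and then combine them by a union bound.

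Start from $\Delta_Y=\rho\Delta+m^{\mathsf T}u$. This gives
\[
\rho\Delta=\Delta_Y-m^{\mathsf T}u\geq\Delta_Y-\sum_k(m_k)_+u_k .
\]
The inequality holds because $u\geq0$ and $m_k u_k\leq(m_k)_+u_k$.

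Next, for each class $k$, the channel form \eqref{eq:channel} and the fact that $A_k$ depends on the history only give
\[
\Prob(Y=k\mid H\in A_k)=\rho\,\Prob(Z=k\mid H\in A_k)+u_k\geq u_k .
\]
So on the event $E_1=\{U_k\geq\Prob(Y=k\mid H\in A_k)\ \forall k\}$ we have $u_k\leq U_k$. For an empty region, $U_k=1\geq u_k$ trivially. The Clopper--Pearson bound is valid conditionally on the count $N_k=\#\{i:H_i\in A_k\}$: given $N_k$, the reports of those records are iid Bernoulli with the regional frequency, and $N_k=0$ is covered by convention. Each bound therefore fails with probability at most $\alpha/(2K)$, and $\Prob(E_1^c)\leq\alpha/2$.

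For the estimation term, write each record as $W_i=(D_i,V_i)$. Here
\[
D_i=\ind\{g(H_i)=Y_i\}-\ind\{b(H_i)=Y_i\}\in[-1,1],\qquad V_{ik}=\ind\{g(H_i)=k\}-\ind\{b(H_i)=k\}.
\]
Define $\phi(w;v)=D-\sum_k (\hat m_k)_+ v_k$. The difficulty is that $(\hat m_k)_+$ is data-dependent and multiplies $U_k$, which is also random. I would avoid this interaction by using that $U_k\in[0,1]$ and $\sum_k(x_k)_+$ is $1$-Lipschitz in the $\ell_1$ sense. On $E_1$ this gives
\[
L\leq\widehat\Delta_Y-\sum_k(\hat m_k)_+u_k-\sqrt{8\log(2/\alpha)/n}.
\]
It then suffices to show that, with probability at least $1-\alpha/2$,
\[
\widehat\Delta_Y-\sum_k(\hat m_k)_+u_k\leq\Delta_Y-\sum_k(m_k)_+u_k+\sqrt{8\log(2/\alpha)/n}.
\]
For this, use $(\hat m_k)_+\geq \hat m_k$ when $m_k\leq0$. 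When $m_k>0$, the term $\sum_{k:m_k>0}(m_k-\hat m_k)u_k$ is an average of bounded variables. So the left-minus-right difference is at most the centered mean of
\[
D_i-\sum_{k:m_k>0}V_{ik}u_k-\sum_{k:m_k\leq 0}V_{ik}u_k\cdot 0 ,
\]
which is a fixed (nonrandom-weight) function of $W_i$. Its range has width at most $4$, since $|D_i|\leq1$ and $\sum_k|V_{ik}|u_k\leq 2\max u_k\leq 2$ with the signs handled. Hoeffding then gives the deviation $\sqrt{8\log(2/\alpha)/n}$ at level $\alpha/2$ with slack; I take the range-$4$ bound $\sqrt{16\log(2/\alpha)/(2n)}$, which matches the stated constant.

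A union bound gives $L\leq\Delta_Y-\sum_k(m_k)_+u_k\leq\rho\Delta$ with probability at least $1-\alpha$. If $\Delta\leq0$, then $\rho\Delta\leq0$, so $\Prob\{L>0\}\leq\alpha$. If $L>0$, then $\Delta\geq L/\rho\geq L$ since $\rho\leq1$, so $L$ is a positive lower bound. Reusing the same records is harmless because only a union bound, and no independence between the two events, was used.

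The main obstacle is the coupling of the random $(\hat m_k)_+$ with the random $U_k$. The fix above relies on monotonicity in $U_k$ (because $(\hat m_k)_+\geq0$) and on replacing $(\hat m_k)_+$ by a deterministic linear surrogate via $(x)_+\geq x\ind\{m_k>0\}$. That surrogate step needs careful checking of the constant.
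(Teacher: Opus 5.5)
Your proof is correct and is essentially the paper's argument: Clopper--Pearson witness coverage with probability at least $1-\alpha/2$, monotonicity of $(\hat m_k)_+U_k$ in $U_k$ to pass to the fixed true $u$, one-sided Hoeffding for a fixed-weight iid contrast in $[-2,2]$, and a union bound that needs no independence. The range is in $[-2,2]$ because the weighted mass term is a difference of two numbers in $[0,1-\rho]$; the bound $2\max_k u_k$ alone would only give width six.

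The one real difference is the comparison point. You compare against $u$ restricted to $\{k:m_k>0\}$. For $m_k\le0$ this uses $(\hat m_k)_+\ge0$, not $(\hat m_k)_+\ge\hat m_k$ as you wrote. The paper instead uses $(\hat m_k)_+u_k\ge\hat m_ku_k$ for every $k$. Its summand $T_i(u)=\ind\{g_i=Y_i\}-\ind\{b_i=Y_i\}-u_{g_i}+u_{b_i}$ then has mean exactly $\rho\Delta$. This makes both your sign split and your ``main obstacle'' unnecessary.
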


Appendix~\ref{app:certificate} proves the finite-sample bound.

For a finite history alphabet, independent training selects minimum-frequency
cells. Increasing the training and calibration counts recovers the sharp
condition whenever its margin is strictly positive.

\subsection{Repeated Release Control}
At release $t$ spend $\alpha_t=\alpha/[t(t+1)]$. Policies and witnesses may
depend on past releases but must precede that release's fresh iid calibration.
Conditional validity and $\sum_t\alpha_t=\alpha$ bound the probability of
ever accepting a non-improving release by $\alpha$. Each comparison uses fresh
calibration data and its assigned error budget.

\subsection{Bounded Channel Misspecification}
Let total variation (TV) measure deviation from the reference channel.
The actual channel satisfies
\begin{equation*}
\mathop{\mathrm{ess\,sup}}_{h,z}\mathrm{TV}(C_h(\cdot\mid z),
\rho e_z+u)\leq\epsilon
\end{equation*}
for one $\rho>0$, $u\geq0$,
$\sum_k u_k=1-\rho$. Set $V_k=\min(1,U_k+\epsilon)$ and
$\hat d=n^{-1}\sum_i\ind\{g_i\ne b_i\}$.
\begin{theorem}[Bounded-misspecification certificate]\label{thm:tv}
For policies and history-only witnesses fixed independently of iid calibration,
\begin{equation*}
 L_\epsilon=\hat\Delta_Y-\sum_k(\hat m_k)_+V_k-2\epsilon\hat d
 -(4+2\epsilon)\sqrt{\frac{\log(2/\alpha)}{2n}}
\end{equation*}
satisfies $\Prob(L_\epsilon\leq\rho\Delta)\geq1-\alpha$.
Accepting only when $L_\epsilon>0$ has level-$\alpha$ error for $\Delta\leq0$.
On the same event an accepted $L_\epsilon$ lower-bounds $\Delta$.
\end{theorem}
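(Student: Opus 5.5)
We follow the structure of the proof of Theorem~\ref{thm:witness}: first prove a population identity with a misspecification remainder, then control three empirical deviations on one event of probability at least $1-\alpha$.

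\textbf{Population identity.} Write the actual channel as $C_h(\cdot\mid z)=\rho e_z+u+r_{h,z}$, where $r_{h,z}$ is a signed vector with $\sum_k r_{h,z}(k)=0$ and $\tfrac12\|r_{h,z}\|_1\le\epsilon$. Conditioning on $(H,Z)$ gives
\[
\Delta_Y=\rho\Delta+m^{\mathsf T}u+\E\bigl[r_{H,Z}(g(H))-r_{H,Z}(b(H))\bigr].
\]
The last term is nonzero only when $g\neq b$. On that event, $|r(g)-r(b)|\le \|r\|_1\le 2\epsilon$. Hence the remainder is at most $2\epsilon d$, where $d=\Prob(g\neq b)$. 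This gives
\[
\rho\Delta\ge\Delta_Y-m^{\mathsf T}u-2\epsilon d\ge \Delta_Y-\sum_k(m_k)_+u_k-2\epsilon d,
\]
using $u\ge0$.

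\textbf{Witness bound under misspecification.} For each $h\in A_k$, average the actual report law over $Z\mid H=h$. Since $r_{h,z}(k)\ge-\epsilon$,
\[
\Prob(Y=k\mid H=h)=\rho p_k(h)+u_k+\E[r_{h,Z}(k)\mid H=h]\ge u_k-\epsilon .
\]
Averaging over $A_k$ gives $u_k\le \Prob(Y=k\mid H\in A_k)+\epsilon$. Also $u_k\le 1$ trivially. The Clopper--Pearson bound at level $\alpha/(2K)$ then gives $u_k\le V_k$ simultaneously for all $k$, with probability at least $1-\alpha/2$. This uses that the witnesses are history-only and fixed before calibration. Empty regions are covered by $V_k=1$.

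\textbf{Concentration and the main obstacle.} The main obstacle is replacing $(m_k)_+$, $\Delta_Y$ and $d$ by their empirical versions while using only a single Hoeffding-type width. The per-record quantity
\[
W_i=\bigl[\ind\{g_i=Y_i\}-\ind\{b_i=Y_i\}\bigr]-\sum_k V_k\bigl[\ind\{g_i=k\}-\ind\{b_i=k\}\bigr]_{\text{signed}}-2\epsilon\ind\{g_i\neq b_i\}
\]
is not directly usable, because the $(\cdot)_+$ is taken after averaging and $V_k$ is random.

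We first handle the positive parts. Since $\sum_k (m_k)_+ V_k$ is monotone in $m$, we use
\[
\sum_k(\hat m_k)_+V_k\ \ge\ \sum_k \hat m_k s_k V_k \quad\text{with } s_k=\ind\{m_k>0\}\text{ deterministic}.
\]
We then bound $\sum_k m_k s_k V_k$ by its empirical analogue. The vector $V$ is random but lies in $[0,1]^K$. The map $y\mapsto\sum_k s_kV_k(\ind\{g=k\}-\ind\{b=k\})$ has per-record range at most $2$ uniformly in $V$, because it equals $s_gV_g-s_bV_b$. We therefore bound $\sup_{v\in[0,1]^K}\bigl|\sum_k s_k v_k(\hat m_k-m_k)\bigr|$. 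This supremum is attained at the indicator vectors of $\{k:\hat m_k>m_k\}$ and of its complement, each a fixed-class deviation. Our first attempt is to reduce it to the single set $S=\{k: s_k=1\}$, which handles the dependence on $V$. The remaining deviation is in the combined bounded statistic $\ind\{g=Y\}-\ind\{b=Y\}-(s_g-s_b)-2\epsilon\ind\{g\ne b\}$. Its range is at most $4+2\epsilon$. Hoeffding's inequality at level $\alpha/2$ then gives width $(4+2\epsilon)\sqrt{\log(2/\alpha)/(2n)}$, which matches the stated constant.

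\textbf{Conclusion.} A union bound places both the witness event and the concentration event on one event of probability at least $1-\alpha$, and on it $L_\epsilon\le\rho\Delta$. If $\Delta\le0$, then $L_\epsilon>0$ occurs with probability at most $\alpha$. On the same event, $L_\epsilon>0$ gives $\Delta\ge L_\epsilon/\rho\ge L_\epsilon$ because $\rho\le1$.
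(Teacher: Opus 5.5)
\textbf{There is a genuine gap, in the concentration step.} Your population bound $\rho\Delta\ge\Delta_Y-m^{\mathsf T}u-2\epsilon d$ is correct and agrees with the paper. So is your misspecified witness bound, $u\le V$ simultaneously with probability at least $1-\alpha/2$.

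After you replace $(\hat m_k)_+$ by $s_k\hat m_k$, the random vector $V$ is still in the penalty, and $V$ is computed from the same calibration records as $\hat m$. You therefore need $\sum_k s_kV_k(m_k-\hat m_k)$ to be small. Its worst case over $v\in[0,1]^K$ is $\sum_{k\in S}(m_k-\hat m_k)_+$, attained at the set $\{k\in S:\hat m_k<m_k\}$. That set depends on the data, so this is not a fixed-class deviation, and reducing to the single set $S$ does not control it.

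The Hoeffding bound you actually apply, to the statistic with coefficient $s_g-s_b$, is valid. But it controls $\hat\Delta_Y-\sum_{k\in S}\hat m_k-2\epsilon\hat d$, which is a certificate whose witness bounds are all set to one on $S$. To transfer it to $L_\epsilon$ you would need $\sum_k(\hat m_k)_+V_k\ge\sum_{k\in S}\hat m_k$. This fails in general, because $V_k$ can be far below one. So the chain ending in $L_\epsilon\le\rho\Delta$ does not close. An honest uniform bound over subsets or coordinates would cost extra union-bound factors and would not give the stated radius.

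\textbf{The missing idea} is to put the fixed but unknown true background $u$ into the concentrated score, rather than $V$ or $\ind_S$. Define $T_i=\ind\{g_i=Y_i\}-\ind\{b_i=Y_i\}-u_{g_i}+u_{b_i}-2\epsilon\ind\{g_i\ne b_i\}$.
\begin{itemize}
\item These scores are iid and lie in $[-2-2\epsilon,2]$.
\item By your population bound, in its $m^{\mathsf T}u$ form, they have mean $\Delta_Y-m^{\mathsf T}u-2\epsilon d\le\rho\Delta$.
\item A single one-sided Hoeffding bound at tail $\alpha/2$ then gives $\bar T-(4+2\epsilon)\sqrt{\log(2/\alpha)/(2n)}\le\rho\Delta$, where $\bar T=\hat\Delta_Y-\hat m^{\mathsf T}u-2\epsilon\hat d$.
\item On the witness-coverage event, passing to the data-dependent penalty is deterministic: $\hat m^{\mathsf T}u\le\sum_k(\hat m_k)_+u_k\le\sum_k(\hat m_k)_+V_k$. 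Hence $L_\epsilon\le\bar T-(4+2\epsilon)\sqrt{\log(2/\alpha)/(2n)}$.
\end{itemize}
The positive parts act on $\hat m$ pointwise. You therefore need no population sign pattern $s_k$, no separate concentration of $\hat d$, and no uniformity over $V$. The union bound with witness coverage then finishes exactly as in your conclusion. This is the paper's argument.
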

The proof and local tightness examples are in Appendix~\ref{app:tv}.
A conditional arbitrary-copy budget $\beta$ on top of
base deviation $\gamma$ gives $\epsilon=\gamma+\beta-\gamma\beta$.
We specify these budgets independently of the calibration reports.
History information alone determines the witnesses.
Section~\ref{sec:witness-study} evaluates channel sensitivity and learned witnesses.
Appendix~\ref{app:power} gives an acceptance condition for split-trained
score-threshold regions on arbitrary history spaces.
The condition separates region probability, approximation error and sample size.
Conditional validity composes candidate learning, independent auditing and
policy-level Nous receipts. The parameters $\pi$ and $\xi$ describe sufficient
power conditions.

\subsection{Sparse Policy Changes}
The global Hoeffding radius ignores how rarely the policies differ.
Theorem~\ref{thm:disagreement} instead conditions the paired score on the
history-only event $g\ne b$, while retaining full-sample witness coverage.
For disagreement probability $d$ and a positive conditional margin $\mu$,
Proposition~\ref{prop:sparsepower} gives sufficient sample size
$O((d\mu^2)^{-1})$ when conditional prediction masses are balanced, with
a matching identity-channel lower bound. A binomial lower bound
on $d$ converts conditional evidence into an unconditional positive-gain
receipt. Conditioning reduces finite-sample dilution while preserving the
population identified set.

\section{Constructing Certifiable Revision Batches}\label{sec:batch}
The preceding certificates evaluate a fixed policy on an iid population sample.
A complementary operation is to construct and audit a particular pending batch
of memory revisions. This allows dependent histories, such as multiple turns
from one dialogue, while changing the estimand to the audited batch itself.

Fix histories and truths $(H_i,z_i)_{i=1}^N$, predictions $g_i,b_i$, and a
selection $x_i\in\{0,1\}$ fixed before fresh reports, with $x_i=0$ if $g_i=b_i$.
The vector $e_k$ is the coordinate vector for class $k$.
Write $M=\sum_i x_i$, $a=\sum_i x_i(e_{g_i}-e_{b_i})$, and
\begin{equation*}
 G_x=\frac1N\sum_i x_i(\ind\{g_i=z_i\}-\ind\{b_i=z_i\}).
\end{equation*}
Under a common channel
$C_0=\rho I+u\mathbf1^\top$, the expected sum of audit contrasts is
$\rho N G_x+u^\top a$. Consequently, preserving predicted class counts
($a=0$) cancels the unidentified reporting background exactly.
Figure~\ref{fig:balanced-cycle} gives a three-class example: retaining a cycle
conserves each predicted class count, whereas an additional one-direction
proposal creates a nonzero background term.

\begin{figure}[htbp]
\centering
\includegraphics[width=\linewidth]{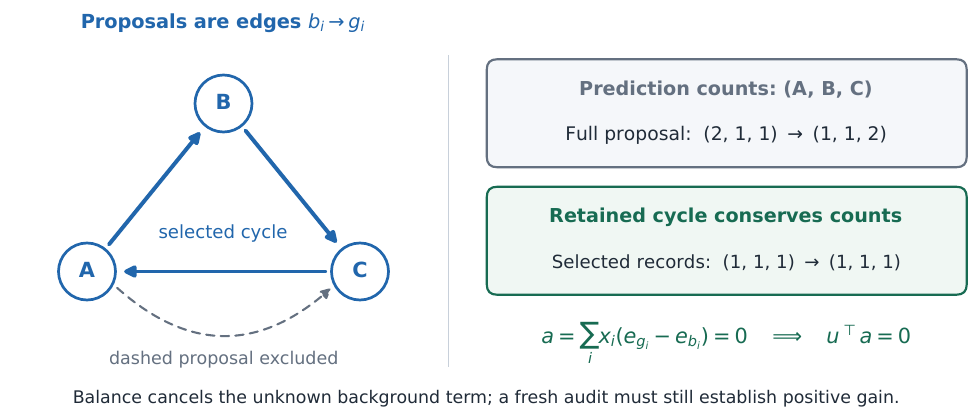}
\caption{A prediction-count-preserving revision cycle.}
\label{fig:balanced-cycle}
\end{figure}

\begin{proposition}[Conservation and construction]\label{prop:circulation}
For any fixed $0<\rho<1$, the additive term $u^\top a$ vanishes for every
$u\geq0$ with $\sum_k u_k=1-\rho$ if and only if $a=0$.
Represent proposal $i$ by a directed edge $b_i\to g_i$. A nonempty balanced
subset exists exactly when this multigraph contains a directed cycle.
Maximizing $\sum_i x_i s_i$ over balanced subsets is a maximum-score
circulation with an integral linear-program optimum. If $s_i$ equals the
expected audit contrast under the exact channel, its optimizer also maximizes
$G_x$ over balanced subsets without knowing $\rho$ or $u$.
\end{proposition}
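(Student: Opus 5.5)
The proof has four parts, one for each claim in Proposition~\ref{prop:circulation}: the conservation equivalence, the cycle criterion, integrality of the linear program, and equivalence of the optimizers.

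\emph{Conservation.} The vector $a$ always satisfies $\mathbf1^\top a=0$, because each edge contributes $e_{g_i}-e_{b_i}$. If $a=0$, then $u^\top a=0$ for every $u$, so one direction is immediate. For the converse, suppose $a\ne0$. Then some coordinate $a_k$ is nonzero. Take the admissible vector $u=(1-\rho)e_k$. This gives $u^\top a=(1-\rho)a_k\ne0$ because $\rho<1$. Alternatively, one can compare $u=(1-\rho)e_k$ with $u=(1-\rho)e_j$; since the coordinates of $a$ sum to zero and are not all zero, they cannot all be equal. In either form, the argument uses only the extreme points of the simplex scaled by $1-\rho$.

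\emph{Cycle criterion.} A subset $S$ of proposals is balanced exactly when, for every class $k$, the out-degree of $k$ in the edges of $S$ equals its in-degree. This holds because $a_k=\#\{i\in S:g_i=k\}-\#\{i\in S:b_i=k\}$. If the multigraph has a directed cycle, its edge set is nonempty and balanced. There are no self-loops, since $x_i=0$ whenever $g_i=b_i$. Conversely, take a nonempty balanced edge set and start a walk along any of its edges. Every vertex entered has out-degree at least its in-degree, which is at least one, so the walk can always continue. With finitely many vertices, the walk eventually repeats a vertex, and this yields a directed cycle.

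\emph{Integrality.} The balanced subsets are the $0/1$ solutions of $Bx=0$ with $0\le x\le1$, where $B$ is the vertex--edge incidence matrix of the directed multigraph. Incidence matrices of directed graphs are totally unimodular. Appending the identity rows for the bounds preserves total unimodularity, and the right-hand sides are integral. Hence every vertex of the polytope is integral. The polytope is nonempty (it contains $x=0$) and bounded, so the LP maximum of $\sum_i s_ix_i$ is attained at an integral vertex. That vertex is a maximum-score circulation with $0/1$ flows.

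\emph{Objective equivalence.} Under the exact channel $C_0$, the expected audit contrast of proposal $i$ is $s_i=\rho(\ind\{g_i=z_i\}-\ind\{b_i=z_i\})+u_{g_i}-u_{b_i}$. Summing over a selection gives $\sum_i x_is_i=\rho NG_x+u^\top a$. On balanced subsets $a=0$, so the objective equals $\rho NG_x$, a positive multiple of $G_x$. The two objectives therefore share the same maximizers over balanced subsets, and computing them requires no knowledge of $\rho$ or $u$. The main care point in the whole argument is the cycle-extraction step together with the exclusion of self-loops. The remaining steps are short verifications.
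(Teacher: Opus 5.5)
Your proof is correct and follows the paper's argument: the extreme backgrounds $u=(1-\rho)e_k$ force $a=0$, total unimodularity of the directed incidence matrix yields an integral optimum over the unit-capacity circulation polytope, and exact cancellation reduces the expected-contrast objective to the positive multiple $\rho N G_x$ of $G_x$. The only difference is in the cycle criterion. You extract a cycle by an explicit walk and explicitly exclude self-loops, whereas the paper cites the cycle decomposition of nonnegative circulations. The exclusion is needed because a self-loop is a directed cycle that the rule $x_i=0$ for $g_i=b_i$ forbids selecting.
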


Conservation preserves the incumbent's prediction histogram without requiring
a uniform histogram or known true class proportions.
The maximum-score circulation uses classical integrality \citep{schrijver}.
Count-constrained label assignments also appear in label-proportion learning
\citep{matsuo}. We use the constraint to cancel the reporting background.
For scores approximating the exact audit contrasts $t_i$, clean regret relative
to the best balanced subset is at most $\sum_i|s_i-t_i|/(\rho N)$.
Appendix~\ref{app:batch} proves this bound.

\begin{samepage}
\begin{theorem}[Fixed-batch receipt]\label{thm:batch}
Condition on the batch histories, truths and selection, with $a=0$ and $M>0$.
Given all preceding audit reports, suppose each selected report's conditional
kernel is within TV $\epsilon$ of the same positive channel $C_0$.
For $W_i=\ind\{g_i=Y_i\}-\ind\{b_i=Y_i\}$ on selected records, set
\begin{equation*}
 L_x=\frac1M\sum_i x_iW_i-2\epsilon
                    -\sqrt{\frac{2\log(1/\alpha)}M}.
\end{equation*}
Accept if and only if $L_x>0$. With probability at least $1-\alpha$, acceptance implies
$0<(M/N)L_x\leq G_x$. For $M=0$ decline. Histories and truths may be dependent and
nonidentically distributed. No witness regions are required.
\end{theorem}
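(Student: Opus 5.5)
We prove Theorem~\ref{thm:batch} with a martingale argument conditional on the fixed batch. Throughout we condition on the histories, truths, predictions and selection $x$; all of these are fixed before the fresh reports. Enumerate the selected records in audit order $i_1,\ldots,i_M$ and let $\mathcal F_j$ be the $\sigma$-field generated by the batch data and the first $j$ reports. The key quantity is the conditional mean $\mu_j=\E[W_{i_j}\mid\mathcal F_{j-1}]$, which we compare to the exact-channel value.

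First we compute that value. Under $C_0=\rho I+u\mathbf1^\top$, a report for a record with truth $z$ has law $\rho e_z+u$. Hence
\begin{equation*}
 \E_{C_0}[W_i]=\rho(\ind\{g_i=z_i\}-\ind\{b_i=z_i\})+u_{g_i}-u_{b_i}.
\end{equation*}
Next we account for misspecification. $W_i$ is a difference of two indicators and takes values in $[-1,1]$. For any two laws $P,Q$ and any function $f$ with range in an interval of length $2$, we have $|\E_Pf-\E_Qf|\leq 2\,\TV(P,Q)$. Therefore
\begin{equation*}
 \mu_j\leq \rho(\ind\{g=z\}-\ind\{b=z\})_{i_j}+u_{g_{i_j}}-u_{b_{i_j}}+2\epsilon .
\end{equation*}
Summing over $j$, the background terms add up to $u^\top a$, which is $0$ because $a=0$. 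We obtain
\begin{equation*}
 \frac1M\sum_j\mu_j\leq \frac{\rho N G_x}{M}+2\epsilon .
\end{equation*}

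Now we apply concentration. The differences $D_j=W_{i_j}-\mu_j$ form a martingale difference sequence with respect to $\mathcal F_j$. Each $W_{i_j}$ lies in an interval of length $2$ given $\mathcal F_{j-1}$. The one-sided Azuma--Hoeffding inequality then gives
\begin{equation*}
 \Prob\Bigl(\tfrac1M\textstyle\sum_j D_j\geq t\Bigr)\leq\exp(-Mt^2/2).
\end{equation*}
Setting $t=\sqrt{2\log(1/\alpha)/M}$ makes the right side $\alpha$. So, with probability at least $1-\alpha$,
\begin{equation*}
 \frac1M\sum_i x_iW_i<\frac{\rho NG_x}{M}+2\epsilon+\sqrt{\frac{2\log(1/\alpha)}{M}},
\end{equation*}
which is exactly $L_x<\rho NG_x/M$.

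On this event, acceptance means $L_x>0$. It follows that $G_x>0$ and that $(M/N)L_x<\rho G_x\leq G_x$, using $\rho\leq1$ and $G_x>0$. Together these give $0<(M/N)L_x\leq G_x$. When $M=0$ the procedure declines, so there is nothing to prove.

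No iid assumption enters the argument. Dependence and nonidentical distribution of histories and truths are absorbed by conditioning on the batch. Sequential dependence of the reports is handled by the martingale, because the channel contract holds conditionally on all preceding reports.

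The main obstacle is stating the conditioning correctly. The TV bound has to hold for the kernel given $\mathcal F_{j-1}$, and not only marginally, so that $\mu_j$ is controlled pathwise. The audit order must also be fixed, or at least predictable, so that the Azuma filtration is well defined. If the order were adaptive, we would enumerate selected records by their position in the report sequence, which keeps each index predictable.
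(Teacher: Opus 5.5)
Your proposal is correct and follows essentially the same route as the paper. It uses a per-report conditional bias bound of $2\epsilon$ from the TV contract (oscillation two) and exact cancellation of the background sum $u^\top a$ under $a=0$, then conditional Hoeffding/Azuma with range width two, and finally $0<\rho\le1$ to pass from $L_x<\rho N G_x/M$ to $0<(M/N)L_x\le G_x$. Your emphasis on requiring the TV bound conditionally on $\mathcal F_{j-1}$ rather than marginally, with a predictable audit order, matches the paper's fixed pre-audit enumeration and its marginal-versus-conditional counterexample.
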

\end{samepage}

The proof applies conditional bounded-score concentration after background
cancellation. The implementation uses KL--Chernoff inversion, which gives a
lower endpoint at least as large as the displayed Hoeffding endpoint.
All primary comparators use this confidence-bound construction.
For an unbalanced subset, subtracting $\max_k a_k/M$ gives a conservative
comparison because $u^\top a/M\leq(1-\rho)\max_k a_k/M$.
The dialogue experiment compares acceptance frequency with retained clean gain.

\section{Experiments}\label{sec:experiments}
We evaluated inference accuracy, revision certification and action outcomes.
Table~\ref{tab:protocols} summarizes the data splits. We selected policies using
training and development data and fixed each comparison before calibration.
Controlled studies used generated or executable state labels. The dialogue
study used published human annotations.

\begin{table}[t]
\centering\small
\begin{tabular}{lrrrr}
\toprule
Study & Training & Development & Calibration & Test\\
\midrule
Mutable state & 4,000 & 1,000 & 5,000 & 5,000\\
MiniGrid & -- & -- & 5,000 & 1,000\\
Learned witnesses & 4,000 & 4,000 & 8,000 & 5,000\\
MultiWOZ & 8,438 & 1,000 & -- & 1,000\\
\bottomrule
\end{tabular}
\caption{Data splits per run. Counts denote histories for mutable state and
learned witnesses, episodes for MiniGrid, and dialogues for MultiWOZ.
The learned-witness development column denotes region-selection data.}
\label{tab:protocols}
\end{table}

\subsection{Mutable-State Evaluation}
We generated independent, copied and delayed-copy reports from a controlled
HMM. A versioned state store executed commands and rejected stale versions.
We checked the withheld current state by replaying the command sequence.
All policies received the same report snapshots, arrival times and source
identifiers. The candidate set included native and root-capped Nous, temporal
predictors and HMM inference baselines. Root-capped Nous used each recorded
source root once per snapshot. Arrival last-write-wins (LWW) retained the
latest arriving report.

Table~\ref{tab:ledger-accuracy} compares inference accuracy across reporting
conditions. Temporal inference improved on arrival LWW. In the copy columns,
the joint-emission HMM achieved greater accuracy than the selected policy.
We selected the candidate using development next-report accuracy and evaluated
the fixed policy with an independent current-state audit. The unrestricted-channel
certificate accepted the selected candidate and rejected a harmful constant
proposal. Table~\ref{tab:acceptance-summary} gives the acceptance counts.
Table~\ref{tab:ledger-gains} distinguishes aggregate gain from harmful and
beneficial individual revisions. Supplementary Appendix~\ref{app:ledger}
provides the command protocol and inference settings.

\subsection{Memory-Dependent Action Evaluation}
We evaluated MiniGrid Memory environments with a noisy-report interface
\citep{minigrid}. The task required remembering an initial object cue and
choosing its matching branch after traversing a corridor. An observation-based
controller handled routing. Memory policies received noisy reports and source
identifiers, and their cue predictions determined the terminal action.
We executed both terminal choices in cloned simulator states to obtain paired
action outcomes.

We compared root-capped Nous with arrival LWW and a constant incumbent.
The witness for each class contained histories in which the first source
reported another class. Table~\ref{tab:minigrid} reports task success.
Table~\ref{tab:acceptance-summary} compares the unrestricted and compatible-channel
certificates. The compatible-channel certificate accepted improvements over
both incumbents, whereas the unrestricted certificate accepted none.
The single-source baseline achieved greater success than root-capped Nous in
each environment column. Supplementary Appendix~\ref{app:minigrid} provides
the paired gains and action-audit procedure.

\begin{table}[t]
\centering\small
\begin{tabular}{lrrr}
\toprule
Method & Memory S7 & Memory S13 & S17 random \\
\midrule
Arrival LWW & 61.67 $\pm$ 1.63 & 60.97 $\pm$ 0.81 & 61.17 $\pm$ 1.50 \\
Constant zero & 50.83 $\pm$ 0.85 & 49.93 $\pm$ 1.27 & 49.70 $\pm$ 2.51 \\
Native Nous / majority & 61.23 $\pm$ 1.00 & 60.70 $\pm$ 0.26 & 61.03 $\pm$ 1.10 \\
Root-capped Nous & 75.50 $\pm$ 1.32 & 74.17 $\pm$ 0.72 & 73.63 $\pm$ 1.42 \\
Single source 0 & 84.40 $\pm$ 0.75 & 85.23 $\pm$ 1.12 & 84.83 $\pm$ 0.78 \\
Old gate vs. LWW & 61.67 $\pm$ 1.63 & 60.97 $\pm$ 0.81 & 61.17 $\pm$ 1.50 \\
Compatible gate vs. LWW & 71.60 $\pm$ 7.69 & 65.33 $\pm$ 7.28 & 65.67 $\pm$ 7.26 \\
Compatible gate vs. constant & 75.50 $\pm$ 1.32 & 74.17 $\pm$ 0.72 & 73.63 $\pm$ 1.42 \\
\bottomrule
\end{tabular}

\caption{MiniGrid task success (\%). Values are means with sample standard
deviations across three runs. LWW denotes last-write-wins.}
\label{tab:minigrid}
\end{table}

\subsection{Categorical and Repeated-Release Controls}
We tested beneficial, null and harmful fixed policies across categorical
state spaces, report delays and calibration sizes. We also evaluated fresh
repeated releases with the error-spending schedule in Section~\ref{sec:certificate}.
Table~\ref{tab:control-summary} reports the negative findings from these controls.

The phase study varied report signal strength and background bias.
Figure~\ref{fig:phase} shows acceptance frequency alongside the population
identification boundary. The zero-acceptance cells include both nonidentifiable
comparisons and identifiable comparisons with insufficient finite-sample power.
The unrestricted certificate accepted no changes throughout the grid.
Supplementary Appendix~\ref{app:controls} specifies the generating laws.

\begin{figure}[t]
\centering
\includegraphics[width=\linewidth]{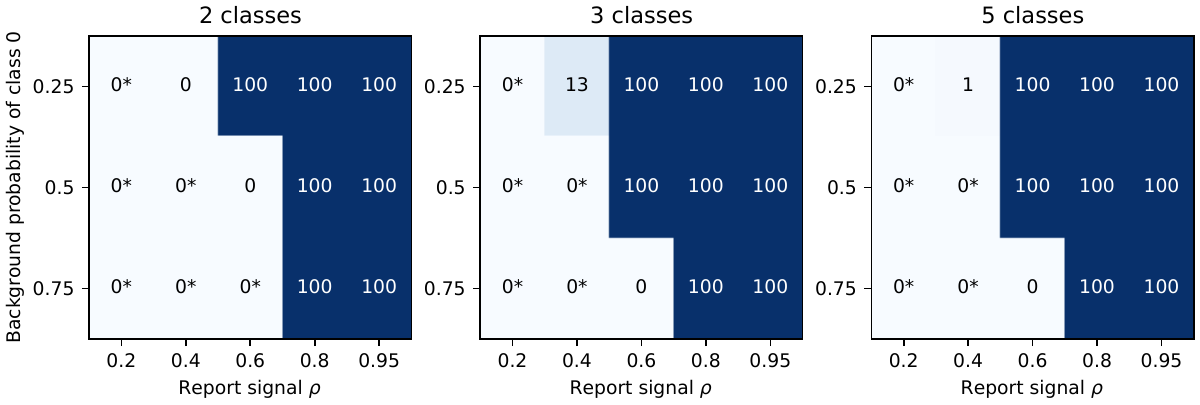}
\caption{Compatible-channel acceptance frequency (\%) by report signal and
background bias. Stars denote nonpositive identified lower endpoints.}
\label{fig:phase}
\end{figure}

\subsection{Channel Sensitivity and Learned Witnesses}\label{sec:witness-study}
We evaluated the largest deviation budget at which each MiniGrid comparison
remained accepted:
\begin{equation*}
\epsilon_{\max}=\sup\{\epsilon\in[0,1]:L_\epsilon>0\}.
\end{equation*}
We used calibration data alone. Supplementary Figure~\ref{fig:sensitivity}
shows how the lower bound decreases with the assumed deviation.
An accepted comparison at zero budget can tolerate only limited misspecification.
Table~\ref{tab:sensitivity-ranges} records the reported tolerance ranges.

We also learned witness regions from continuous histories.
Design data fitted a linear report predictor and candidate policy.
Independent selection data chose among score-threshold regions.
Fresh calibration data evaluated the fixed candidate and witnesses.
Table~\ref{tab:learned} compares learned regions with fixed-coordinate and
whole-history regions. Learned witnesses increased acceptance at the intermediate
deviation setting. All methods declined at the largest setting despite positive
clean gains. Supplementary Appendix~\ref{app:witness-study} provides the
construction and replay procedure.

\begin{table}[t]
\centering\small
\begin{tabular}{rrrrr}
\toprule
TV budget & Learned & Fixed coordinate & Whole history & Test gain (pp) \\
\midrule
0.00 & 4/4 & 4/4 & 0/4 & 28.82 \\
0.03 & 4/4 & 0/4 & 0/4 & 29.39 \\
0.10 & 0/4 & 0/4 & 0/4 & 27.84 \\
\bottomrule
\end{tabular}

\caption{Learned-witness acceptance counts and mean clean test gain
(percentage points). TV denotes total variation and pp denotes percentage points.}
\label{tab:learned}
\end{table}

\subsection{Sparse-Revision Evaluation}
We varied class count, calibration size, disagreement probability, conditional
clean gain and copying probability using a prespecified grid.
The disagreement-conditioned certificate accepted more beneficial proposals
than the global certificate. Neither method accepted null or harmful proposals.
Table~\ref{tab:acceptance-summary} reports the aggregate counts.
The conditioned-bound column of Supplementary Table~\ref{tab:sparse} shows
the sample-size dependence at rare disagreement.
Both rules had zero acceptance in the smallest rare-disagreement setting.
This trend agrees with the dilution addressed by conditional auditing.

\subsection{Human-Written Dialogue Evaluation}\label{sec:dialogue}
MultiWOZ~2.4 contains task-oriented human conversations with corrected
development and test state annotations \citep{multiwoz,multiwoz24}.
We evaluated seven categorical slots. Term frequency--inverse document
frequency (TF--IDF) features and logistic-loss classifiers produced local and
full-history reports. Features contained only dialogue text available at the
query time. Supplementary Appendix~\ref{app:multiwoz} specifies the ontology,
feature construction and implementation settings.

Native Nous, last-report, frequency and recency-weighted policies shared the
local extractor. The full-history classifier used a larger contextual view.
Development accuracy selected the full-history candidate for each slot.
Table~\ref{tab:dialogue} compares its accuracy and revision gains with native Nous.
Balancing retained only part of the candidate's gain.
The state-change columns of Supplementary Table~\ref{tab:dialogue-slices}
show the difference between all-turn accuracy and changing-state performance.
The oracle-extraction results in Supplementary Table~\ref{tab:dialogue-oracle}
indicate that extraction and context account for much of the accuracy gap.

\begin{table}[t]
\centering\small
\begin{tabular}{lrrrrr}
\toprule
Slot & Nous & History & Full gain & Balanced gain & Revisions kept\\
\midrule
Hotel area & 84.56 & 89.64 & 5.07 & 2.24 & 294/737\\
Hotel price & 83.19 & 89.91 & 6.71 & 1.87 & 292/776\\
Hotel parking & 88.28 & 95.40 & 7.12 & 1.65 & 227/725\\
Hotel internet & 87.53 & 95.01 & 7.47 & 1.30 & 164/707\\
Restaurant area & 81.77 & 90.75 & 8.98 & 3.21 & 460/1108\\
Restaurant price & 82.55 & 92.45 & 9.89 & 3.38 & 426/1089\\
Attraction area & 84.79 & 90.88 & 6.09 & 2.16 & 373/845\\
\bottomrule
\end{tabular}

\caption{MultiWOZ test accuracy (\%), batch gain (percentage points), and
retained revision counts. History denotes the full-history classifier.}
\label{tab:dialogue}
\end{table}

Each slot's available test turns formed a fixed batch with dependent dialogue
histories. We selected balanced subsets before generating fresh reports.
The audit simulation varied reference signal strength and candidate copying.
All comparators used the same underlying reports and confidence-bound construction.
The scored and count-only balanced methods committed different subsets from
the full corrected proposal. We therefore evaluated deployed gain as well as
acceptance frequency.

Table~\ref{tab:acceptance-summary} reports forward and harmful reverse
acceptances. The corrected methods rejected every harmful reverse proposal.
The naive report-agreement test accepted harmful proposals.
Figure~\ref{fig:batch-tradeoff} shows the utility trade-off.
In its low-signal region, balancing increased deployed gain through more frequent
certification. In its high-signal region, the full corrected proposal retained
greater deployed gain. Supplementary Table~\ref{tab:dialogue-audit-grid}
reports the complete channel grid.

\begin{figure}[t]
\centering
\includegraphics[width=\linewidth]{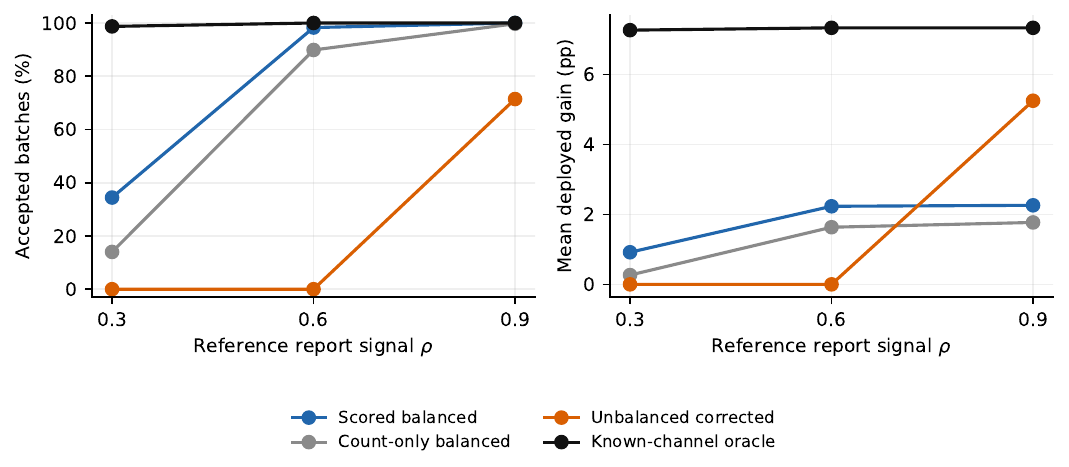}
\caption{Acceptance frequency (\%, left) and mean deployed batch gain
(percentage points, right) under no-copy audits.}
\label{fig:batch-tradeoff}
\end{figure}

\begin{table}[t]
\centering\small
\begin{tabular}{llrr}
\toprule
Study & Method or comparison & Beneficial accepts & Harmful accepts\\
\midrule
Mutable state & Unrestricted certificate & 9/9 & 0/9\\
Mutable state & Report agreement & -- & 9/9\\
MiniGrid & Compatible vs. constant & 9/9 & --\\
MiniGrid & Compatible vs. LWW & 4/9 & --\\
Sparse revisions & Global certificate & 1,558/3,600 & 0/7,200\\
Sparse revisions & Conditioned certificate & 2,720/3,600 & 0/7,200\\
MultiWOZ & Scored balanced & 4,196/6,300 & 0/6,300\\
MultiWOZ & Count-only balanced & 3,434/6,300 & 0/6,300\\
MultiWOZ & Full corrected & 1,076/6,300 & 0/6,300\\
MultiWOZ & Known-channel oracle & 5,860/6,300 & 0/6,300\\
MultiWOZ & Naive report agreement & -- & 1,959/6,300\\
\bottomrule
\end{tabular}
\caption{Revision acceptances. LWW denotes last-write-wins.
The sparse-revision harmful column includes null proposals.
Dashes denote unreported or inapplicable comparisons.}
\label{tab:acceptance-summary}
\end{table}

\begin{table}[t]
\centering\small
\begin{tabular}{lr}
\toprule
Control outcome & Reported count or bound\\
\midrule
Initial beneficial acceptances & 0/1,800\\
Initial null/harmful acceptances & 0/3,600\\
Repeated-release false acceptances & 0/200\\
Repeated-release upper bound (\%) & 1.83\\
Phase cells with complete acceptance & 23\\
Phase cells with partial acceptance & 2\\
Phase cells with zero acceptance & 20\\
Zero-acceptance nonidentified cells & 16\\
Zero-acceptance underpowered cells & 4\\
\bottomrule
\end{tabular}
\caption{Categorical controls, repeated releases and phase-study outcomes.}
\label{tab:control-summary}
\end{table}

\begin{table}[t]
\centering\small
\begin{tabular}{lrr}
\toprule
Accepted comparison & Minimum tolerance (\% TV) & Maximum tolerance (\% TV)\\
\midrule
Constant incumbent & 3.03 & 5.07\\
LWW incumbent & 0.18 & 1.37\\
\bottomrule
\end{tabular}
\caption{Reported misspecification-tolerance ranges among accepted MiniGrid
comparisons. TV denotes total variation and LWW denotes last-write-wins.}
\label{tab:sensitivity-ranges}
\end{table}

\section{Related Work}\label{sec:related}
We situate the statistical results within task-specific estimation, noisy-label evaluation,
constrained prediction and agent memory.
\subsection{Task-Specific Statistical Rates}
Geometric analyses connect functionals to their estimation rates
\citep{donoholiu}, and mixture identifiability controls parameter rates
\citep{honguyen}. \citet{abraham} establish quartic dependence on proximity
to rank deficiency for HMM emission estimation. \citet{gassiat} contrast stable
prediction with unstable parameter recovery near independence.
Our shared-observation result compares learning, powered certification and
source estimation on one continuous-source family.
The broader-class result gives decision rates and a source lower bound using
different least-favorable subfamilies.

\subsection{Noisy Labels and Partial Identification}
\citet{scott} characterize the feasible binary contamination family.
\citet{katzsamuels} study multiclass mutual contamination under irreducibility
and mixing assumptions. Noisy posterior extrema also determine decision
thresholds and classification rates \citep{menon,reeve}.
Our identified interval targets fixed-policy gain while retaining uncertainty
about the reporting channel. It does not require pure-class histories.
Proxy-label evaluation has established sensitivity and partial-identification
analyses \citep{fogliato,maiapolo}. \citet{su} study classifier testing and
training with noisy or multiple labels given the conditional label law and
class prior. \citet{guerdan} exploit cancellation on policy-agreement regions
and discuss measurement error. Our certificates combine compatible-channel
bounds with finite-sample guarantees under bounded history dependence.

\subsection{Constrained Prediction and Background Cancellation}
Affine clean/noisy risk ordering appears in noise-tolerant ranking
\citep{haddad}. Constrained classification and count-constrained assignments
are established methods \citep{agarwal,matsuo}. Circulation integrality supplies
the batch optimizer \citep{schrijver}. Our construction preserves the incumbent's
prediction counts to cancel an unknown reporting background.
The resulting conditional certificate concerns the selected finite batch.
Its score-regret bound and retained-gain evaluation quantify the consequences
of that constraint.

\subsection{Stateful Agent Memory}
Dependent truth discovery predates agent-memory applications \citep{dong}.
Nous supplies categorical beliefs and observational updates \citep{nous}.
MemTX and MemTxn study transactional memory commits \citep{memtx,memtxn},
and StateMemBench evaluates evolving states using executable event programs
\citep{statemem}. Our contribution concerns statistical task separation and
auditable policy revision, with Nous providing the integration target.

\section{Limitations}
The learning results concern specified HMM families with complementary
emissions, balanced states and positive anchor orientation. Their asymptotic
rates can have large constants. The broader-class lower bounds use different
least-favorable subfamilies, and neither result establishes pointwise superiority
of a memory learner.

Population certification assumes fresh iid calibration records from the
deployment law and a positively oriented shared-background reference channel.
Deviation and copying budgets require external justification.
The available reports do not identify these budgets.
The categorical channel is restricted, and the finite-sample bounds can have
low power. Useful witness regions with sufficient probability may not exist.
The sparse-revision rate is established on a balanced conditional subfamily.

Fixed-batch certification requires conditional channel bounds given preceding
reports. Marginal bounds are insufficient. The receipt concerns the audited
batch and does not certify individual revisions or future histories.
Conservation can discard beneficial changes or yield no nonempty subset.
Different unknown signal scales cannot be pooled without further assumptions.
Version checks and artifact hashes do not establish an audit contract or
adversarial authentication.

The action study uses a static cue, shared routing and controlled reporting.
The dialogue study uses one benchmark, seven slots, local models and published
annotations that can contain errors. Audit errors and copying are simulated.
These studies do not validate natural reporting channels, real source lineage,
open-ended attributes or online batch scheduling.
Supplementary Table~\ref{tab:uci-diagnostics} retains the previously exposed
time-series diagnostics. Those data sets do not validate the calibration assumptions.
The certificates concern accuracy gain, not posterior calibration or security
against unrestricted adversaries. Independent external proof review and
replication remain pending.

\section{Conclusion}
We study decision learning and revision certification when source reliability
is unknown. A shared-observation HMM family separates their sample requirements
from those of source estimation. Compatible-channel bounds certify population
policy gain, while prediction-count conservation enables conditional fixed-batch
certification. The experiments show increased acceptance under weak audits and
the accuracy gain sacrificed by balancing. Nous integrates these certificates
as policy-level evidence for memory revision.

\ifdefined\nousijar
\section{Declaration of Generative AI and AI-Assisted Technologies}
\else
\section{AI Use Statement}
\fi
Generative artificial intelligence tools assisted theory and proof development,
experimental design, synthetic-data generation, implementation and testing.
They also assisted literature analysis, result interpretation and manuscript
preparation. The author is responsible for verifying this work and the final content.

\ifdefined\nousjournal
\section{Data and Code Availability}
The accompanying research archive contains the protocols, input data,
simulator traces, audit statistics, source code and replay checks.
It includes the licensed public MultiWOZ~2.4 data and the evaluated models
and predictions. The versioned archive and file checksums are available at
\href{https://github.com/Pranavsingh431/nous-state/releases/tag/jmlr-submission-2026-10-04}{the Nous reproducibility release}.

\fi

\section{Ethics Statement}
We did not recruit new human participants or disclose private dialogues.
The text experiment reused public benchmark conversations under their
redistribution license. The interface distinguishes population-policy and
fixed-batch certificates to reduce the risk of interpreting aggregate evidence
as an individual safety guarantee.

\ifdefined\nousjmlr
\acks{The author received no funding or third-party support for this work
during the past 36 months and declares no relevant competing financial
interests. An earlier version of this manuscript is available as
\url{https://arxiv.org/abs/2610.00094}. The present revision adds continuous
source uncertainty, a matching powered-certification lower bound,
disagreement-conditioned auditing, certifiable balanced batches and the
MultiWOZ text-policy evaluation. The original Nous system paper is a separate
related preprint \citep{nous}.}
\fi

\fi
\def\nousappendices{1}
\ifdefined\nousappendices
\clearpage
\appendix
\setcounter{table}{0}
\setcounter{figure}{0}
\setcounter{equation}{0}
\setcounter{theorem}{7}
\renewcommand{\thetable}{S\arabic{table}}
\renewcommand{\thefigure}{S\arabic{figure}}
\renewcommand{\theequation}{S\arabic{equation}}
\renewcommand{\theHtable}{S\arabic{table}}
\renewcommand{\theHfigure}{S\arabic{figure}}
\renewcommand{\theHequation}{S\arabic{equation}}
\iftrue

This supplement contains proofs, evaluation protocols and extended results.
We use HMM for hidden Markov model, iid for independent and identically
distributed, TV for total variation, and KL for Kullback--Leibler divergence.
Last-write-wins (LWW) denotes the latest-report baseline.
We use linear programming (LP) for gain endpoints and batch selection.
\fi
\section{Proof of the Shared-Experiment Theorem}\label{app:shared}
Write $A=1/4$, $B=3/8$, $\eta_s(x)=Ax_1+sB\chi(x)$,
$c\in[1,3/2]$, and $\lambda=\ell/c^2$. Since $c(A+B)\leq15/16$,
all emissions are positive. The anchor parameter $a=cA$ and transition
parameter both lie in the original model class. Hidden-path summation gives
\begin{equation*}
 P_{s,c}(x_0,x_1,y)=\frac{1+\ell e_0e_1+\ell Ay e_1+
                      (\ell^2/c^2)Ay e_0}{128},\quad e_t=\eta_s(x_t).
\end{equation*}
Every cell is at least $1/256$. Marginally $X_1$ is uniform, so both
$g_s=s\chi(X_1)$ and $b=X_{1,1}$ predict either label with probability $1/2$.

\subsection{Learning and a Matching Lower Bound}
The orientation moment satisfies
$\E[\chi(X_1)Y]=s\ell AB=3s\ell/32$, independent of $c$.
Its empirical sign, with fixed tie breaking, learns $s$ with failure at most
$\delta$ for
\begin{equation*}
 n_{\rm learn}\geq(2048/9)\ell^{-2}\log(1/\delta).
\end{equation*}
The full-history posterior mean has numerator
$c[\eta_s(X_1)+\lambda\eta_s(X_0)]$ and denominator
\begin{equation*}
1+\lambda c^2\eta_s(X_0)\eta_s(X_1)>0.
\end{equation*}
Its sign is $g_s$, because $B-A-\lambda(A+B)\geq3/64$.
Thus the learner finds the full-history Bayes rule, despite using only a
particular moment of the full available records.

For fixed $c$, the two $s$ laws differ in four orthogonal Walsh characters,
giving $\mathrm{KL}(P_{+,c},P_{-,c})\leq(9/32)\ell^2$.
For $n\leq16/(9\ell^2)$, product KL and Pinsker give TV at most $1/2$.
Decode $s$ from the sign of $\E_U[g(H)\chi(X_1)]$, where $U$ is uniform on
all 64 histories. Clean regret is
\begin{equation*}
 \frac c2\E_U[(s\chi(X_1)-g(H))
                 (\eta_s(X_1)+\lambda\eta_s(X_0))].
\end{equation*}
If decoding is wrong, the parity contribution is at least $B$ and the
remaining contributions have total magnitude at most $A+\lambda(A+B)$.
Hence regret is at least $c[B-A-\lambda(A+B)]/2\geq3/128$.
Le Cam implies worst-case failure at least $1/4$ for smaller tolerances.
Randomized rules use their conditional mean prediction in the decoder.

\subsection{Certification Using the Future Report}
Given $Z_1,H$, the future report has mean $\lambda aZ_1$. It is therefore
already a symmetric shared-background audit of current state, with
$\rho=\ell A/c$ and $u_0=u_1=(1-\rho)/2$. No second sensor or clean-label
sample is supplied. Candidate and incumbent accuracies are $(1+cB)/2$ and
$(1+cA)/2$, so
\begin{equation*}
 \Delta=c/16,\qquad m_0=m_1=0,\qquad\Delta_Y=\rho\Delta=\ell/64.
\end{equation*}
Use the certificate of Theorem~\ref{thm:witness} on a fresh iid split with
both witness regions equal to the whole history space. Conditional validity
also covers the incorrectly learned orientation, whose gain is
$-c(A+B)/2<0$. Conditional on correct learning, put
$t=\sqrt{2\log(4/\eta)/n_{\rm audit}}$. One-sided Hoeffding for the proxy
gain and two-sided Hoeffding for the binary mass change yield
$\hat\Delta_Y\geq\ell/64-t$ and $|\hat m_1|\leq t$ except with probability
at most $3\eta/4$. Since $U_k\leq1$ and $\hat m_0=-\hat m_1$,
\begin{equation*}
 L\geq\ell/64-2t-\sqrt{8\log(2/\alpha)/n_{\rm audit}}.
\end{equation*}
Thus
\begin{equation*}
 n_{\rm audit}>4096\bigl[2\sqrt{2\log(4/\eta)}+
                  \sqrt{8\log(2/\alpha)}\bigr]^2\ell^{-2}
\end{equation*}
suffices for acceptance with probability at least $1-\eta$ conditional on
correct learning. Joint correct learning and acceptance has probability at
least $1-\delta-\eta$, while non-improving acceptance is bounded by $\alpha$.
These sufficient constants are large. The claim is a rate comparison, not
practical small-sample efficiency or superiority of this gate on this family.

For a matching certification lower bound, fix the same proposal
$g_+=\chi(X_1)$ against $b=X_{1,1}$ in the worlds $(s,c)=(+,1)$ and $(-,1)$.
Its clean gains are $1/16$ and $-5/16$, respectively. Let $T$ be any test
(possibly randomized) based on $n$ full records, with
$\E_-T\leq\alpha$ and $\E_+T\geq1-\eta$. Then
\begin{equation*}
 1-\alpha-\eta\leq\TV(P_+^n,P_-^n)
 \leq\sqrt{9n\ell^2/64},\qquad
 n\geq\frac{64}{9}(1-\alpha-\eta)^2\ell^{-2}.
\end{equation*}
The lower bound counts all records supplied to the test, including any
training records it reuses. It is a powered fixed-candidate testing statement,
not a lower bound for a rule allowed always to decline. Combined with the
fresh-split upper bound, it establishes the fixed-confidence certification rate.

\subsection{Matching Source-Estimation Bounds}
For each fixed $s$, switching between $c=1$ and $c=3/2$ preserves all order-$\ell$ terms of the
full law. The difference is
$\ell^2(5/9)Ay\eta_s(x_0)/128$.
Using $\E_U[\eta_s(X_0)^2]=13/64$ and the cell floor gives, in either direction,
\begin{equation*}
 \mathrm{KL}(P_{s,1},P_{s,3/2})\leq
 \chi^2(P_{s,1},P_{s,3/2})\leq(325/41472)\ell^4.
\end{equation*}
For $n\leq20736/(325\ell^4)$, TV is at most $1/2$. The anchor parameters
$1/4$ and $3/8$ differ by $1/8$. Testing at midpoint $5/16$ yields
$\max_c\Prob(|\hat a-a|\geq1/16)\geq1/4$. This bound applies to all
observed records pooled across learning and certification splits, even if $s$
is known. Receipts cannot add information beyond those records.

Conversely $\E[X_{0,1}Y]=\ell^2A^2/c^2$ for every $c\in[1,3/2]$,
independently of $s$. For the empirical moment $\bar T$, set
\begin{equation*}
 \hat u=\operatorname{clip}_{[4/9,1]}\left(\frac{16\bar T}{\ell^2}\right),
 \qquad \hat a=\frac{1}{4\sqrt{\hat u}}.
\end{equation*}
Clipping cannot increase error about $u=c^{-2}$. The derivative of
$u\mapsto(4\sqrt u)^{-1}$ has magnitude at most $27/64$ on this interval.
Thus $|\hat a-a|\leq(27/4)\ell^{-2}|\bar T-\E T|$, and Hoeffding gives
\begin{equation*}
 \sup_{s,c}\Prob(|\hat a-a|\geq\tau)
 \leq2\exp[-8n\tau^2\ell^4/729].
\end{equation*}
This proves the matching upper rate on the continuous-source family for any
fixed $\tau>0$, not on the entire original model class. The two endpoint
worlds suffice for its minimax lower bound. Its upper bound covers the interval.

Exact rational checks enumerate all observable cells and hidden paths at
three persistence values, checking Bayes signs, clean/proxy gains, balanced
prediction masses and both chi-squared bounds. These support the algebra.
The inequalities above establish the interval-wide result. The construction
is deliberately restricted and still minimax, not a pointwise statement.
Additional exact checks cover interior source scales and both fixed-candidate
gain signs. Finite checks are regression tests. The interval-wide proof uses
the inequalities above, not a grid extrapolation.

\section{Proof of the Common-Experiment Separation}\label{app:matched}
This appendix proves the uniform decision bound and the two task-specific lower bounds.
\subsection{Uniform Decision Upper Bound}
Conditional emission independence and the symmetric transition imply
$\E[Y\mid Z_1,H]=\lambda a Z_1$. Hence conditional report error is symmetric
and independent of $H$ given $Z_1$. For the finite class $\mathcal G$ of all
$2^{64}$ deterministic policies, Hoeffding and a union bound give simultaneous
risk-estimation error at most
$\epsilon_n=\sqrt{\log(2|\mathcal G|/\delta)/(2n)}$.
An empirical minimizer has excess future risk at most $2\epsilon_n$.
The positive symmetric bridge preserves the conditional Bayes rule and divides
this excess by $\lambda a\geq\ell/9$, giving \eqref{eq:upper}.

\subsection{Nontrivial Decision Lower Bound}
Set $a=.3$, $b=.5$, $\lambda=\ell$, and let
$\chi(x)=x_1x_2x_3$, $\eta_s(x)=a x_1+s b\chi(x)$ for $s\in\{-1,+1\}$.
Define $f_s(x\mid z)=[1+z\eta_s(x)]/8$.
These are positive complementary emissions with anchor informativeness $a$.
Summing the three latent states gives the full observed law
\begin{equation*}
 P_s(x_0,x_1,y)=\frac{1+\lambda\eta_s(x_0)\eta_s(x_1)
 +\lambda a y\eta_s(x_1)+\lambda^2 a y\eta_s(x_0)}{128}.
\end{equation*}
Every cell exceeds $1/256$, since $|\eta_s|\leq.8$ and $\lambda\leq1/8$.
Writing $\chi_t=\chi(x_t)$, the difference is
\begin{equation*}
 P_+-P_-=\frac{2ab}{128}
 \left[\lambda(x_{0,1}\chi_1+\chi_0x_{1,1}+y\chi_1)
       +\lambda^2y\chi_0\right].
\end{equation*}
The four Walsh characters in brackets are orthogonal under the uniform
128-cell law. Therefore
\begin{align*}
 \KL(P_+\Vert P_-)&\leq\sum_o\frac{(P_+(o)-P_-(o))^2}{P_-(o)}\\
 &\leq8a^2b^2(3\lambda^2+\lambda^4)
 \leq32a^2b^2\lambda^2.
\end{align*}
Pinsker and product additivity give
$\TV(P_+^n,P_-^n)\leq4ab\lambda\sqrt n\leq1/2$ for
$n\leq1/(64a^2b^2\ell^2)$.

Summing out $Y$, the unnormalized current posterior margin is
$\eta_s(x_1)+\lambda\eta_s(x_0)$. Since
$b-a-\lambda(a+b)\geq.1$, its sign is $s\chi_1$. Both Bayes accuracies are
$(1+b)/2=.75$. If $\E_U$ denotes uniform averaging over the 64 histories,
any realized deterministic policy has excess
\begin{equation*}
 R_s(g)-R_s(g_s^*)=\frac b2-\frac12
 \E_U[g(\eta_s(x_1)+\lambda\eta_s(x_0))].
\end{equation*}
Decode the model sign using $\sgn(\E_U[g\chi_1])$, with a fixed tie rule.
When this decoding is wrong, the main parity term cannot be positive. The
remaining terms have magnitude at most $a+\lambda(a+b)$. The excess is thus
at least $[b-a-\lambda(a+b)]/2\geq.05$. Le Cam's two-point inequality gives
maximum decoding error at least $(1-\TV)/2\geq1/4$, and therefore the same
lower bound for probability of excess at least $.05$.
Randomized learners are covered by adjoining their independent random seed.
The same bounded-term argument applies to a randomized policy's expected
prediction. This proves the quadratic decision lower bound.

\subsection{Parameter Lower Bound without Discarding Observations}
Let $U_t=Z_tE_t$, where the independent errors have $\E[E_t]=a$. Set
$X_{t,1}=U_t$ and $X_{t,j}=U_tB_{t,j}$ for $j=2,3$, with independent $B$'s
flipping with known probability $.1$. These emissions have full support and
belong to the same class. Conditional on the observed anchor triple
$(U_0,U_1,U_2)$, the four additional source entries are generated by a
parameter-independent channel. Conversely the full observation contains the
anchor triple. The KL chain rule therefore gives exact equality between
full-record KL and anchor-triple KL.

The latter law is
\begin{equation*}
 P_{a,\lambda}(u,v,w)=
 \frac{1+\lambda a^2(uv+vw)+\lambda^2a^2uw}{8}.
\end{equation*}
Choose $(a_0,\lambda_0)=(.5,\ell)$ and
$(a_1,\lambda_1)=(.75,4\ell/9)$. Both are in $\mathcal M_\ell$.
Their lag-one coefficients coincide at $\ell/4$, while lag-two coefficients
differ by $d=5\ell^2/36$. Every scalar cell is at least $1/16$, giving
\begin{equation*}
 \KL(P_0\Vert P_1)\leq2d^2=25\ell^4/648,\qquad
 \TV(P_0^n,P_1^n)\leq5\ell^2\sqrt n/36.
\end{equation*}
For $n\leq324/(25\ell^4)$, total variation is at most $1/2$.
Thresholding an estimate at $5/8$ turns it into a test. An incorrect test
implies estimation error at least $1/8$, so
\begin{equation*}
 \inf_{\hat a}\max_{j\in\{0,1\}}
 \Prob_j\{|\hat a-a_j|\geq1/8\}\geq1/4.
\end{equation*}
The two minimax lower bounds use different least-favorable subfamilies inside
one model class. The parameter result is not obtained by withholding source
observations available to the decision learner.

\begin{figure}[t]
\centering
\includegraphics[width=\linewidth]{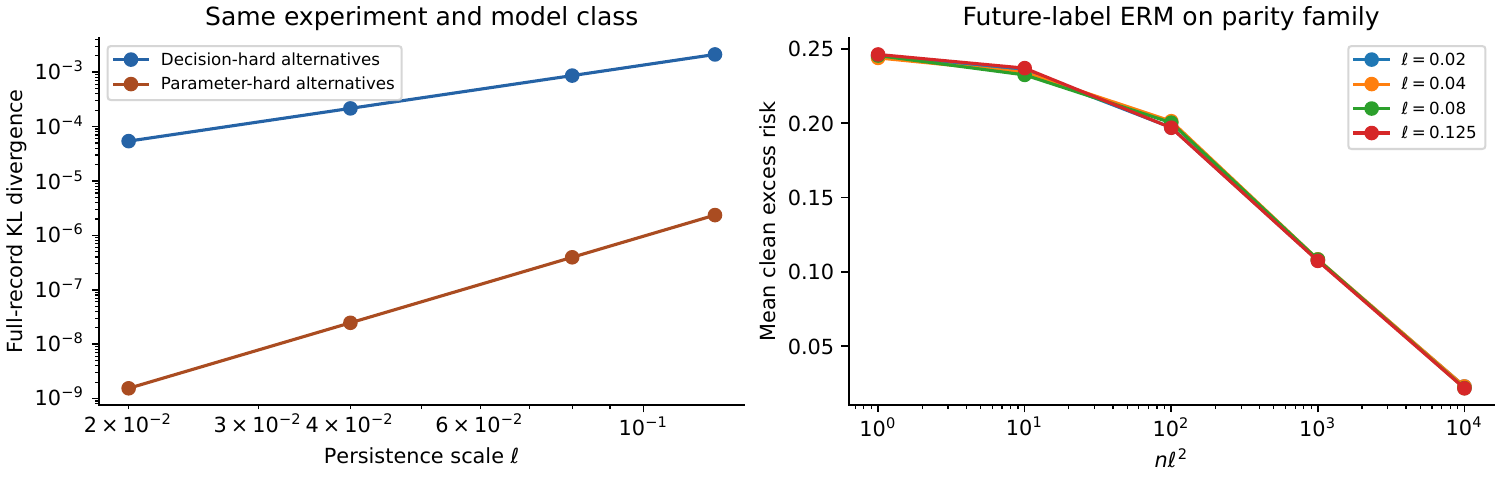}
\caption{Full-record divergences and future-label risk in the broader model
class. KL denotes Kullback--Leibler divergence and ERM denotes empirical risk minimization.}
\label{fig:matched}
\end{figure}

Figure~\ref{fig:matched} compares the exact 128-cell laws and future-label
empirical risk minimization over 500 repetitions per point.
Decision-hard alternatives have quadratic KL scaling, while source-hard
alternatives have quartic scaling. The right panel shows risk against the
scaled sample count. The simulation uses the broader class of
Theorem~\ref{thm:matched}.

\section{Details of Compatible-Channel Certification}\label{app:certificate}
We prove the identified gain interval and its finite-sample witness certificate.
\subsection{Proof of Theorem~\ref{thm:sharp}}
Nonnegativity of $p$ requires the box constraints. Conversely every point
in the box gives a positive $\rho$ and normalized nonnegative $p(h)$, and hence
a joint law $\Prob(H=h)p_Z(h)C_{Y,Z}$ with the observed distribution. Equation
\eqref{eq:identity} yields $F$. The box is compact and connected and its
denominator stays positive, giving attainable endpoints and all intermediate
values. Minimizing the numerator independently sets positive-$m$ coordinates
to $\ell_k^Y$ and negative-$m$ coordinates to zero, proving \eqref{eq:sharp}.
Linear-fractional optimization obtains the endpoints (Appendix~\ref{app:certificate}).
\hfill$\blacksquare$

\subsection{Proof of Theorem~\ref{thm:witness}}
The simultaneous event $u_k\leq U_k$ has probability at least $1-\alpha/2$.
On it, the empirical worst numerator over $[0,U]$ is no larger than the
empirical numerator at the fixed true $u$. Its iid summands
\begin{equation*}
 T_i(u)=\ind\{g_i=Y_i\}-\ind\{b_i=Y_i\}-u_{g_i}+u_{b_i}
\end{equation*}
lie in $[-2,2]$ and have mean $\rho\Delta$. One-sided Hoeffding with error
$\alpha/2$ gives the stated radius. A union bound completes the proof without
conditioning on data-derived $U$, or taking a union over fitted vertices.
\hfill$\blacksquare$

\subsection{Linear Program for Sharp Endpoints}
Set $t=1/(1-\sum_k u_k)$ and $v=tu$. Minimize or maximize
$\Delta_Yt-m^{\mathsf T}v$ subject to
\begin{equation*}
 t-\sum_k v_k=1,\qquad 0\leq v_k\leq\ell_k^Y t,\qquad t\geq1.
\end{equation*}
The inverse is $u=v/t$. The population implementation refuses constant $q$:
then $\sum_k\ell_k^Y=1$ and the compact positive-denominator argument no
longer applies. This restriction is unnecessary for the finite-sample witness
test, which remains valid for constant channels with $\rho>0$.

\subsection{Binary Sign Condition}
Use labels $0,1$, write $r(h)=\Prob(Y=1\mid H=h)$ and
$m=\Prob(g=1)-\Prob(b=1)$. Since the two mass changes are $(-m,m)$,
the sharp positive-gain condition is
\begin{equation*}
 \Delta_Y>
 \begin{cases}
 m\min_h r(h),&m\geq0,\\
 (-m)(1-\max_h r(h)),&m<0.
 \end{cases}
\end{equation*}
For example, let $H$ be a fair bit and the clean state equal $H$ with
probability $.8$. Let $g=H$, $b=0$, $\rho=.8$ and background probabilities
$(.25,.75)$. Then $r(0)=.31$, $r(1)=.79$, $m=.5$ and $\Delta_Y=.29$.
The sharp numerator is $.135>0$, so a one-direction correction is identifiable.
The unrestricted rule $\Delta_Y>|m|$ cannot accept it.

\subsection{Exact Witness Bound}
If class $k$ has $N_k$ records in its fixed region and $S_k$ of their reports
equal $k$, set
\begin{equation*}
 U_k=\begin{cases}
 1,&N_k=0\text{ or }S_k=N_k,\\
 \operatorname{BetaQuantile}_{1-\alpha/(2K)}(S_k+1,N_k-S_k),&\text{otherwise}.
 \end{cases}
\end{equation*}
Conditionally on the region count, $S_k$ is binomial with the regional
report probability. Exact conditional coverage therefore implies unconditional
coverage, including random region counts. Regions may overlap: the argument
uses a union bound, not independence. Learning regions from the same reports
used in this calculation is not covered by the theorem.

\subsection{Finite-History Consistency}
For each class, select a minimum empirical report-probability history from an
independent training sample, with fixed tie breaking. If every history has
positive probability and both training and calibration sizes diverge, uniform
convergence on the finite alphabet ensures selected population frequencies
converge to $\ell_k^Y$. Counts in the selected calibration regions diverge and
their binomial upper bounds converge to the same values. Empirical policy
gains/masses converge and the Hoeffding radius vanishes. Thus every fixed pair
with strictly positive sharp numerator is eventually accepted with probability
tending to one. No corresponding claim is made at a zero numerator.

\subsection{Unrestricted-Channel Comparator}
For binary $d_i=g_i-b_i$, the earlier test requires both means of
$d_iY_i$ and $-d_i(1-Y_i)$ to be positive. Each is a $\{-1,0,1\}$ score.
With positive/negative counts $W,L$, its exact one-sided sign-test p-value is
$\Prob\{\operatorname{Binomial}(W+L,1/2)\geq W\}$, or one if $W+L=0$.
Both tests must reject at level $\alpha$. Intersection-union logic gives a
level-$\alpha$ guarantee. An entirely upward revision makes the second score
nonpositive everywhere. For $K>2$, the earlier implementation requires every
mean of $D_i-\ind\{g_i=k\}+\ind\{b_i=k\}$ to exceed
$\sqrt{8\log(1/\alpha)/n}$. Its population condition is sufficient, not sharp.

\subsection{Verification}
Tests enumerate all vertices of random compatible boxes for $K=2,3,5$,
compare their gain extrema with the LP, reconstruct the attaining clean laws,
and verify that they reproduce the observed law. Fixed-seed binary/categorical
one-direction examples pass the new certificate. Harmful and empty-sample
examples fail. These checks supplement but do not replace the analytic proof.

\section{Mutable-State Protocol and Complete Results}\label{app:ledger}
At each step an executable state record attempts a change with probability $.12$.
The generator includes failed-version and duplicate-command controls.
Four report snapshots have known synthetic event/arrival times and roots.
Policies receive reports, not initial state, commands or withheld ledger truth.
The candidate selection set comprises native/root-capped Nous, smoothed
suffix-one/two future predictors and independent/joint-emission HMMs. Selection
uses only development next-anchor accuracy. The selected policy is an HMM or
a suffix-two predictor. Table~\ref{tab:replay-summary} records selection and
replay quantities.

\begin{table}[ht]
\centering\small
\begin{tabular}{lrrr}
\toprule
Method & Independent & Copy & Delayed copy \\
\midrule
Arrival LWW & 74.82 $\pm$ 0.39 & 68.41 $\pm$ 0.91 & 58.34 $\pm$ 0.50 \\
Event-time latest & 74.82 $\pm$ 0.39 & 68.41 $\pm$ 0.91 & 67.91 $\pm$ 0.43 \\
Original Nous updater & 77.86 $\pm$ 0.14 & 67.72 $\pm$ 0.94 & 55.81 $\pm$ 0.86 \\
Root-deduplicated Nous & 77.86 $\pm$ 0.14 & 74.51 $\pm$ 0.71 & 73.78 $\pm$ 0.81 \\
Current anchor & 85.31 $\pm$ 0.26 & 80.10 $\pm$ 0.92 & 80.03 $\pm$ 0.81 \\
Current majority & 89.52 $\pm$ 0.23 & 67.92 $\pm$ 0.79 & 67.56 $\pm$ 0.45 \\
Future predictor (suffix 2) & 90.50 $\pm$ 0.51 & 83.06 $\pm$ 0.60 & 82.81 $\pm$ 0.70 \\
Independent-emission HMM & 93.97 $\pm$ 0.19 & 67.49 $\pm$ 0.67 & 67.17 $\pm$ 0.47 \\
Joint-emission HMM & 93.70 $\pm$ 0.27 & 85.40 $\pm$ 0.62 & 85.28 $\pm$ 0.56 \\
Certified Nous policy & 93.97 $\pm$ 0.19 & 84.59 $\pm$ 0.77 & 84.38 $\pm$ 1.87 \\
\bottomrule
\end{tabular}

\caption{Mutable-state accuracy (\%). Values are means with sample standard
deviations. LWW denotes last-write-wins and HMM denotes hidden Markov model.}
\label{tab:ledger-accuracy}
\end{table}

\begin{table}[ht]
\centering\small
\begin{tabular}{llrrr}
\toprule
Scenario & Run & Gain (pp) & 95\% CI (pp) & Harmful / beneficial \\
\midrule
independent & 1 & 19.76 & [18.52, 21.00] & 102 / 1090 \\
independent & 2 & 18.86 & [17.59, 20.13] & 144 / 1087 \\
independent & 3 & 18.82 & [17.56, 20.08] & 136 / 1077 \\
copy & 1 & 15.66 & [14.29, 17.03] & 277 / 1060 \\
copy & 2 & 17.52 & [16.15, 18.89] & 249 / 1125 \\
copy & 3 & 15.38 & [13.94, 16.82] & 350 / 1119 \\
delayed copy & 1 & 28.12 & [26.53, 29.71] & 320 / 1726 \\
delayed copy & 2 & 23.52 & [21.82, 25.22] & 492 / 1668 \\
delayed copy & 3 & 26.48 & [24.88, 28.08] & 349 / 1673 \\
\bottomrule
\end{tabular}

\caption{Selected policy gain and individual revision outcomes.
LWW denotes last-write-wins, pp denotes percentage points, and CI denotes confidence interval.}
\label{tab:ledger-gains}
\end{table}
Tables~\ref{tab:ledger-accuracy} and \ref{tab:ledger-gains} report accuracy and
paired revision outcomes. Table~\ref{tab:replay-summary} records the local runtime.
Full-array audits reconstruct fixed policies, predictions and certificates.
They replay every state record and Dimension receipt without changing selection.
Where reconstruction reruns HMM fitting, it uses the original training split
and fixed seed, never held-out truth.

\section{External Simulator Protocol and Paired Results}\label{app:minigrid}
MiniGrid 3.0.0 and Gymnasium 1.3.0 execute the S7, S13 and S17Random Memory
environments. Controller smoke tests use seeds 0--11. Final episode
seeds are independently derived from the protocol namespace, environment,
run, split and record index and checked for collisions after conversion to
32 bits. Fixed policies precede all final calibration and test generation.

\subsection{Input and Action Audit}
The controller first returns west to the cue room, observes the object while
facing north, traverses east to the junction and observes the upper branch
object. It uses canonical partial observations for routing, never the hidden
success coordinate. The noisy wrapper receives the cue to generate reports,
then every memory policy receives only the reports and recorded copying roots.
Branch choice uses its predicted cue and the visible branch-object identity.
The clean cue retained by the runner is used for evaluator checks only.
No history of pristine cue observations is supplied to a memory baseline.
All methods share routing, including its known layout conventions. This is
memory-module evaluation, not learned navigation.

Calibration resets the same environment family but deliberately stages the
camera at the cue. It does not claim a calibration action rollout. This
changes routing cost, not the cue/report population used by the fixed policy
and witness. Test has no camera-staging shortcut. The runner deep-copies the
junction state and executes both possible terminal decisions to obtain exact
paired action outcomes. The audit independently replays both saved action
sequences from reset for every saved episode seed.
It reconstructs the certificates and checks calibration/test seeds and hashes.
Table~\ref{tab:replay-summary} records these quantities, and
Table~\ref{tab:minigrid-gains} reports the paired gains.

\begin{table}[ht]
\centering\small
\begin{tabular}{llrrr}
\toprule
Environment/run & Incumbent & Accepted & Gain (pp) & Paired 95\% CI \\
\midrule
S13/1 & LWW & 0 & 0.00 & [0.00, 0.00] \\
S13/1 & Constant & 1 & 23.70 & [20.46, 26.94] \\
S13/2 & LWW & 0 & 0.00 & [0.00, 0.00] \\
S13/2 & Constant & 1 & 24.10 & [20.90, 27.30] \\
S13/3 & LWW & 1 & 13.10 & [10.50, 15.70] \\
S13/3 & Constant & 1 & 24.90 & [21.60, 28.20] \\
S17Random/1 & LWW & 1 & 13.50 & [10.81, 16.19] \\
S17Random/1 & Constant & 1 & 24.40 & [21.15, 27.65] \\
S17Random/2 & LWW & 0 & 0.00 & [0.00, 0.00] \\
S17Random/2 & Constant & 1 & 22.60 & [19.46, 25.74] \\
S17Random/3 & LWW & 0 & 0.00 & [0.00, 0.00] \\
S17Random/3 & Constant & 1 & 24.80 & [21.56, 28.04] \\
S7/1 & LWW & 1 & 15.20 & [12.38, 18.02] \\
S7/1 & Constant & 1 & 23.30 & [19.93, 26.67] \\
S7/2 & LWW & 1 & 14.60 & [11.73, 17.47] \\
S7/2 & Constant & 1 & 26.20 & [23.01, 29.39] \\
S7/3 & LWW & 0 & 0.00 & [0.00, 0.00] \\
S7/3 & Constant & 1 & 24.50 & [21.30, 27.70] \\
\bottomrule
\end{tabular}

\caption{MiniGrid deployed gains and paired confidence intervals.
LWW denotes last-write-wins, pp denotes percentage points, and CI denotes confidence interval.}
\label{tab:minigrid-gains}
\end{table}

Saved result files additionally contain harmful
and beneficial overrides, override fraction and actual environment reward.
The certificate targets success/accuracy, not MiniGrid's step-penalized reward.
Changing the final turn sequence can change reward even at equal success.
The source-0 baseline is not oracle-tuned: its report index and evaluation
were fixed in the protocol, although the controlled generator gives it higher
accuracy than any other individual source.

\section{Categorical Control Protocols}\label{app:controls}
For categorical controls $H$ is uniform on $K$ classes. Except for null cases,
$\Prob(Z=H\mid H)=.8$ and each alternative state has probability $.2/(K-1)$.
The one-direction candidate is $g(H)=H$. Its incumbent maps only $H=K-1$
to zero. Thus its true gain is $(.8-.2/(K-1))/K$. The harmful candidate is
constant zero against $b(H)=H$, with gain $1/K-.8$. In null cases $Z$ is
uniform independent of $H$, $g(H)=H$ and $b(H)=(H+1)\bmod K$.
Reports preserve $Z$ with probability $\rho$ and otherwise draw from
$(v_0,(1-v_0)/(K-1),\ldots)$. Fixed witnesses are $A_k=\{H\ne k\}$.

The lag controls use $\rho=.8(.8)^d$. This is a controlled effective
lag-to-signal law, not an empirical estimate from natural trajectories. It
matches the decay of the identity component under a symmetric categorical
transition channel, with the resulting background encoded in the report law.
Longer lags therefore weaken the report evidence explicitly.

Table~\ref{tab:replay-summary} records the independent calibration repetitions
for the categorical, phase and sequential-release controls.
Fixed-cell repetitions retain seeds, decisions, empirical margins and radii.
Sequential runs retain decisions with seeds recoverable from the protocol.
Generator hashes allow regeneration. Exact Clopper--Pearson
intervals for acceptance frequencies are stored for every cell. In a 100-run
cell, 0/100 gives a 95\% upper bound of about 3.62\%, and 100/100 a lower
bound of about 96.38\%. These do not replace theoretical type-I control.

The sharp interval is computed from the known population law only to diagnose
identification in the synthetic study. Neither that law, true $\rho$, true
$u$, nor clean states are supplied to the finite-sample certificate. The
phase follow-up was motivated by the initial zero-power result and is labelled
accordingly. All 45 cells, including its 20 zero-power cells, are reported.

\subsection{History-Dependent Channel Counterexample}
Let $H$ be a fair bit, $g=H$, $b=1-H$. Two worlds have
$\Prob(Z=H\mid H)=.8$ and $.2$, respectively. At a history with
$\Prob(Z=1\mid H)=.8$, set $(q_0,q_1)=(.1,.6)$. If it equals $.2$, set
$(q_0,q_1)=(.4,.9)$. Both worlds have $\Prob(Y=1\mid H)=.5$ and positive
channel orientation, but their gains are $+.6$ and $-.6$. Arbitrary
history-dependent reporting therefore defeats observable-only comparison.
Likewise, independently replacing a report by the candidate's prediction with
probability $\beta$ gives
$\Delta_Y^{\mathrm{copy}}=(1-\beta)\Delta_Y^{\mathrm{base}}
+\beta\Prob(g\ne b)$, potentially rewarding a harmful revision.
The original paired signal is attenuated as well as contaminated.
Known source roots help audit dependence but do
not by themselves validate the assumed channel.

\section{Additional Arithmetic and Assumption Scrutiny}
Symbolic summation over hidden paths verifies the matched construction's
observable law. Exact integer Walsh inner products and rational endpoint
checks verify its orthogonality and positive posterior margin. These checks
support particular proof steps, not automated verification of the full theorem.
For a binary one-direction comparison, complete conditional-binomial summation
evaluates the witness certificate across specified laws using sample sizes
128, 512 and 2048, three gains, and three report channels.
Table~\ref{tab:arithmetic-check} reports the largest computed non-improving
acceptance probability at $\alpha=.05$. The calculation covers the specified
laws, not the supremum over all null laws. Probabilities use
floating-point arithmetic, not exact rational arithmetic or Monte Carlo.
A separate exhaustive small-sample enumeration checks the summation against
the implemented certificate.

An intentionally invalid control chooses witness masks using calibration
labels instead of histories: it accepts a zero-population-gain proposal,
whereas fixed history-only witnesses reject on the same records. This violates
the witness certificate's pre-calibration witness requirement, not its conclusion. The low-level
witness-mask interface cannot verify that requirement from boolean arrays alone. Callers
must enforce it. Neither this audit nor a saved mask hash proves independence.
The reproducibility package retains the checks and the counterexample.

These are self-audits. Bibliographic comparison cannot establish that no
equivalent result exists elsewhere. Automated verification does not imply
independent scientific scrutiny or author approval.

\section{Bounded Channel Misspecification}\label{app:tv}
Let $C_h(\cdot\mid z)$ be the actual audit channel and let a single reference
channel be $C_0(\cdot\mid z)=\rho e_z+u$, with $\rho>0$, $u\geq0$ and
$\sum_k u_k=1-\rho$. Assume
\begin{equation*}
 \mathop{\mathrm{ess\,sup}}_{h,z}
 \mathrm{TV}\bigl(C_h(\cdot\mid z),C_0(\cdot\mid z)\bigr)\leq\epsilon,
 \qquad \mathrm{TV}(P,Q)=\tfrac12\sum_k|P_k-Q_k|.
\end{equation*}
This permits bounded history dependence. The clean deployment law and iid
calibration assumptions remain unchanged. Let $U_k$ be the existing binomial
witness bounds, $V_k=\min(1,U_k+\epsilon)$, and
$\hat d=n^{-1}\sum_i\ind\{g_i\ne b_i\}$.
At $\epsilon=0$ Theorem~\ref{thm:tv} recovers the original witness certificate.
\subsection{Proof of Theorem~\ref{thm:tv}}
Convexity gives $\mathrm{TV}(q(h),\rho p(h)+u)\leq\epsilon$, so
$u_k\leq q_k(h)+\epsilon$. Binomial coverage therefore places the fixed true
$u$ in $[0,V]$ with failure probability at most $\alpha/2$.
The contrast $f_h(y)=\ind\{g(h)=y\}-\ind\{b(h)=y\}$ has oscillation
$2\ind\{g(h)\ne b(h)\}$. The TV expectation inequality bounds its
conditional bias above by $2\epsilon\ind\{g(h)\ne b(h)\}$.
Consequently the fixed-$u$ scores
\begin{equation*}
 S_i=f_{H_i}(Y_i)-u_{g_i}+u_{b_i}
                  -2\epsilon\ind\{g_i\ne b_i\}
\end{equation*}
have mean at most $\rho\Delta$ and lie in $[-2-2\epsilon,2]$.
One-sided Hoeffding at tail $\alpha/2$ gives the displayed radius.
Minimization over $[0,V]$ cannot exceed the empirical true-$u$ score.
The union bound proves the result without conditioning on estimated witnesses.
\hfill$\blacksquare$

\subsection{Conditional Copying and Tight Local Constants}
Suppose a base channel is within $\gamma$ of $C_0$ and is replaced by an
arbitrary report channel with conditional probability $t(h,z)\leq\beta$.
Convexity bounds the resulting TV by
$(1-t)\gamma+t\leq\gamma+\beta-\gamma\beta$. This includes candidate
copying but is stronger than a bound on the average copied fraction.
Moving $\epsilon$ probability from the incumbent label to the candidate label
attains paired bias $2\epsilon$ on disagreement. Removing $\epsilon$ mass
from a background coordinate attains the witness inflation constant one.
These local constants are optimal. Neither global sharpness of the combined
bound nor optimality of its concentration radius is claimed.

Regression tests check zero-budget equivalence, categorical population
envelopes, attained local constants, monotonicity in the budget, and an
explicit null-gain history-dependent attack accepted by the original rule
but rejected at the correct budget. A beneficial control also accepts.
These are implementation checks, not a replacement for a fixed power study.
Budgets are not inferred from noisy reports. Selecting a convenient budget
after inspection does not establish its validity. Fresh-release error
spending continues to apply conditionally when each budget contract holds.

\section{Learned-Witness Power and End-to-End Composition}\label{app:power}
Histories do not necessarily have finite support. For fixed regions with probabilities
at least $\pi>0$, write $r_k=\Prob(Y=k\mid H\in A_k)$ and define
\begin{equation*}
 M=\Delta_Y-\sum_k(m_k)_+\min(1,r_k+\epsilon)-2\epsilon d,
 \qquad d=\Prob(g\ne b).
\end{equation*}
For a desired power failure probability $\eta$, set
\begin{align*}
 a&=\sqrt{2\log(10(K+1)/\eta)/n},\qquad
 b&=\sqrt{\log(10/\eta)/(2n)},\\
 c&=\frac{\sqrt{\log(5K/\eta)}+\sqrt{\log(2K/\alpha)}}{\sqrt{n\pi}},\\
 B&=(4+2\epsilon)\sqrt{\frac{\log(2/\alpha)}{2n}}+(K+1)a+2\epsilon b+c.
\end{align*}
\begin{proposition}[Fixed-region acceptance power]\label{prop:power}
If $n\pi\geq8\log(5K/\eta)$ and $M>B$, the bounded-misspecification
certificate accepts with probability at least $1-\eta$.
\end{proposition}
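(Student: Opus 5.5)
The plan is to show that, on one high-probability event, every empirical ingredient of $L_\epsilon$ in Theorem~\ref{thm:tv} lies within a deterministic distance of its population counterpart. Then $L_\epsilon\geq M-B>0$. I split the failure budget $\eta$ into five parts of $\eta/5$ each, assigned to: (i) the Hoeffding bounds for $\hat\Delta_Y$ and $\hat m_1,\ldots,\hat m_K$; (ii) the bound for $\hat d$; (iii) lower bounds on the region counts $N_k$; (iv) upper deviations of the regional report frequencies $\hat r_k=S_k/N_k$; and (v) a spare part that absorbs rounding. A union bound then gives total failure at most $\eta$.

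\emph{Step 1 (paired and mass terms).} Each of $\hat\Delta_Y$ and $\hat m_k$ is an iid mean of a bounded score with range $2$. Two-sided Hoeffding at error $\eta/(5(K+1))$ per term gives $|\hat\Delta_Y-\Delta_Y|\leq a$ and $|\hat m_k-m_k|\leq a$ simultaneously; here $a=\sqrt{2\log(10(K+1)/\eta)/n}$ is exactly that radius. Because $x\mapsto x_+$ is $1$-Lipschitz and $V_k\leq1$, we get
\begin{equation*}
(\hat m_k)_+V_k\leq(m_k)_+V_k+a .
\end{equation*}
Summing over $k$ and adding the $\hat\Delta_Y$ deviation produces the $(K+1)a$ term.

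\emph{Step 2 (disagreement).} One-sided Hoeffding on the indicators $\ind\{g_i\ne b_i\}$ at error $\eta/10$ gives $\hat d\leq d+b$. This contributes the $2\epsilon b$ term.

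\emph{Step 3 (witness bounds).} This is the main obstacle. I must show that $V_k\leq\min(1,r_k+\epsilon)+c$ for all $k$.

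First, $N_k\sim\mathrm{Binomial}(n,\Prob(H\in A_k))$ with mean at least $n\pi$. The multiplicative Chernoff bound $\Prob(N_k<n\pi/2)\leq\exp(-n\pi/8)$ makes $N_k\geq n\pi/2$ hold for every $k$ except with probability $K\exp(-n\pi/8)\leq\eta/5$, by the hypothesis $n\pi\geq8\log(5K/\eta)$.

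Second, conditional on the counts, $S_k$ is binomial with parameter $r_k$, as in the exact witness bound. I then use the standard fact that the Clopper--Pearson upper limit is dominated by the KL--Chernoff limit, and hence by the Hoeffding limit. This gives
\begin{equation*}
U_k\leq\hat r_k+\sqrt{\log(2K/\alpha)/(2N_k)} .
\end{equation*}
The cases $S_k=N_k$ and $U_k=1$ are harmless after truncation at $1$. One-sided Hoeffding conditional on $N_k$, at error $\eta/(5K)$, gives
\begin{equation*}
\hat r_k\leq r_k+\sqrt{\log(5K/\eta)/(2N_k)} .
\end{equation*}
Substituting $N_k\geq n\pi/2$ turns both radii into the two summands of $c$. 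Since $\min(1,\cdot)$ is $1$-Lipschitz, $V_k\leq\min(1,r_k+\epsilon)+c$. I expect the delicate points here to be the CP--Hoeffding domination and the bookkeeping when conditioning on random counts. If the domination is troublesome, I would fall back on the explicit Beta-quantile tail bound.

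\emph{Step 4 (assembly).} Since $\sum_k(m_k)_+=\tfrac12\sum_k|m_k|\leq1$, the witness slack contributes at most $c$ overall. Collecting terms on the intersection event gives
\begin{equation*}
L_\epsilon\geq M-(K+1)a-2\epsilon b-c-(4+2\epsilon)\sqrt{\log(2/\alpha)/(2n)}=M-B>0,
\end{equation*}
so the certificate accepts with probability at least $1-\eta$.
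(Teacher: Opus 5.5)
Your proposal is correct and follows essentially the same route as the paper's proof: Chernoff plus a union bound for the region counts, simultaneous Hoeffding bounds for $\hat\Delta_Y$, $\hat m_k$ and $\hat d$, conditional binomial Hoeffding for the regional frequencies, domination of the Clopper--Pearson limit by the inverted Hoeffding limit, and assembly using the $1$-Lipschitz clipping and positive-part maps together with $\sum_k(m_k)_+\leq1$. The differences are only bookkeeping: you use a one-sided bound at $\eta/10$ for $\hat d$ and keep a spare $\eta/5$, whereas the paper union-bounds four events at $\eta/5$ each.
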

\subsection{Proof}
Chernoff and a union bound give $N_k\geq n\pi/2$ for all regions except
with probability $\eta/5$. Simultaneous Hoeffding bounds control
$\hat\Delta_Y$ and all $\hat m_k$ within $a$, with failure $\eta/5$,
and $\hat d$ within $b$, with failure $\eta/5$.
Conditional binomial Hoeffding controls each regional frequency above by
$r_k+\sqrt{\log(5K/\eta)/(2N_k)}$, with joint failure $\eta/5$.
Inverting the lower-tail binomial Hoeffding bound gives deterministically
\begin{equation*}
 U_k\leq\min\left(1,\hat r_k+
                 \sqrt{\log(2K/\alpha)/(2N_k)}\right).
\end{equation*}
Thus $\hat V_k\leq\min(1,r_k+\epsilon)+c$ on these events.
Clipping and positive parts are 1-Lipschitz, and $\sum_k(m_k)_+\leq1$.
The empirical numerator is at least $M-(K+1)a-2\epsilon b-c$.
Subtracting the radius and union-bounding the four events proves the claim.
\hfill$\blacksquare$

\subsection{Learned Score Regions}
Use an independent design split to learn arbitrary scores $s_k(h)$ and fix
$J$ threshold regions per class. Conditional on design, suppose every region
has mass at least $\pi$ and the best per-class region has report probability
at most $\ell_k^Y+\xi$, where $\ell_k^Y=\mathop{\mathrm{ess\,inf}}_h q_k(h)$.
Select the smallest empirical report frequency on a separate selection split
of size $N$, breaking ties deterministically. If
$N\pi\geq8\log(2KJ/\eta_{\rm tr})$, all selected regions satisfy
\begin{equation*}
 r_k\leq\ell_k^Y+\xi+
      2\sqrt{\frac{\log(4KJ/\eta_{\rm tr})}{N\pi}}
\end{equation*}
with probability at least $1-\eta_{\rm tr}$. To prove this, union-bound
Chernoff over $KJ$ counts with error $\eta_{\rm tr}/2$, and two-sided
conditional Hoeffding over their frequencies with error $\eta_{\rm tr}/2$.
Empirical minimization costs twice the resulting uniform deviation.
If $S_\epsilon$ denotes $M$ with $r_k$ replaced by $\ell_k^Y$, then
\begin{equation*}
 S_\epsilon>\xi+2\sqrt{\log(4KJ/\eta_{\rm tr})/(N\pi)}+B
\end{equation*}
ensures acceptance probability at least $1-\eta_{\rm tr}-\eta$.
This does not assert that useful large regions exist for every representation.
All population mass and approximation assumptions must be justified. They
are not certified by substituting calibration estimates into this condition.

\subsection{Nous Composition}
Learn a candidate and region family on design data. Select regions on separate
selection data. Lock the policies, regions and externally justified channel
budget. Audit on fresh iid records. We issue a policy-level Nous receipt only
if $L_\epsilon>0$. Conditional on design and selection, Theorem~\ref{thm:tv}
bounds non-improving acceptance by $\alpha$. Integrating over those splits
preserves this guarantee without identifying $\rho$, $u$ or clean posteriors.
The displayed power condition additionally guarantees useful acceptance.
Theorem~\ref{thm:shared} supplies a concrete same-report instance of this composition.
More generally, the HMM learning theorem is one possible candidate learner, not a requirement
that its future-state reports be the pinned audit channel. Repeated fresh
releases retain the stated error-spending guarantee. The receipt concerns
the locked policy population, not correctness of an individual memory value.

\section{Sensitivity and Continuous-Witness Protocol}\label{app:witness-study}
\begin{figure}[ht]
\centering
\includegraphics[width=.95\linewidth]{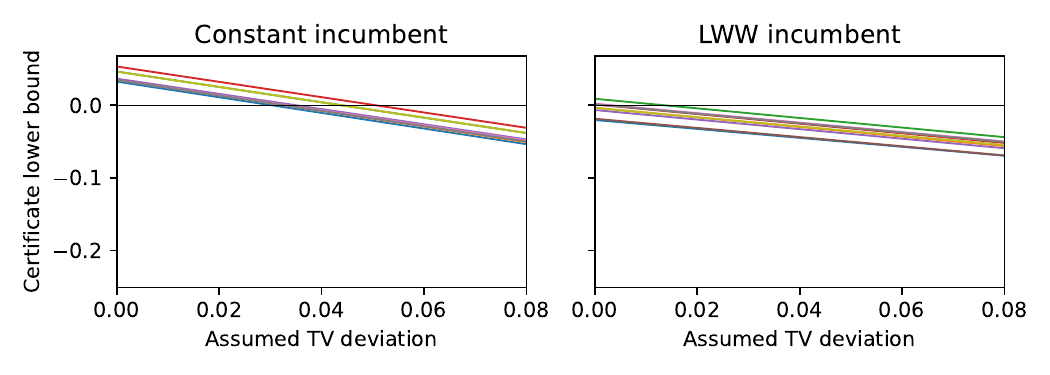}
\caption{MiniGrid lower bounds under increasing assumed channel deviation.
TV denotes total variation and LWW denotes last-write-wins.}
\label{fig:sensitivity}
\end{figure}
Figure~\ref{fig:sensitivity} contains all 18 calibration comparisons.
The LWW panel includes five comparisons that are rejected at zero budget.
The calculation uses calibration data without held-out labels.
The scalar lower bound is continuous, piecewise linear and strictly decreasing
for nonempty samples: clipped witness penalties are nondecreasing, and the
concentration radius strictly increases. Roots are computed numerically to
$10^{-14}$ tolerance. Empty accepting sets are recorded as null, not zero
robustness. The complete per-comparison values are retained in the package.
Selecting an attractive root cannot establish an unknown channel contract.
Actual deviation must still be externally bounded.

\subsection{Learned-Witness Design}
The protocol and runner were written before the first run. There are four
replicates at each $\gamma\in\{0,.03,.10\}$, with 4,000 design, 4,000
selection, 8,000 calibration and 5,000 test records per cell. All 48 split
seeds are distinct and derived from a new namespace. The study is synthetic:
$H\sim N(0,I_{16})$, $\Prob(Z=1\mid H)=\mathrm{sigmoid}(2H_0+H_1)$, and
\begin{equation*}
 \Prob(Y=1\mid H,Z)=.1+.8Z+\gamma\tanh(H_2).
\end{equation*}
Thus the reference has $\rho=.8$, $u=(.1,.1)$, and the per-history/class
TV deviation is at most $\gamma$ by construction. This is not a fitted bound.

On design data fit ridge least squares to $Y$ from an intercept and all 16
coordinates, using unnormalized penalty one including the intercept. Freeze
the candidate $g(h)=\ind\{s(h)>.5\}$ against constant zero. No clean labels
train it. The witness scores are $s_1=s$ and $s_0=1-s$. For each score the
five thresholds are design quantiles $.1,.2,.3,.4,.5$. Independent selection
data choose the threshold with minimum empirical class-specific audit
frequency, with deterministic tie breaking. The coefficients, thresholds,
selected indices and budget are locked before inspecting calibration or test
labels. Only then apply the certificate to fresh calibration.

Comparators use the entire history space or fixed regions $H_0>0$ for class
zero and $H_0\leq0$ for class one. All three use the same candidate, incumbent,
calibration records and separately fixed $\alpha=.05$ tests. Saved results
include nominal and correct-budget rules, region counts, all certificate
terms, test accuracy and paired gain/standard error for every cell. The
correct-budget acceptances are in the main table. No method is selected by
test accuracy or calibration acceptance. Conditional validity holds regardless
of region quality. The stronger population mass/approximation assumptions
of the power proposition are not claimed to have been empirically certified.

The audit regenerates every split, refits design coefficients, reconstructs
selection, verifies locks and all certificates, and recomputes held-out metrics
from saved data. This is self-reproduction, not independent replication.
There is no language model, textual extraction or new external environment
in this experiment. The existing external MiniGrid and mutable-state results
remain unchanged, and their strong baselines and negative outcomes remain.

\section{Sufficient Conditions for a Three-Task Information Separation}\label{app:information-order}
This section extracts a reusable sufficient-condition argument from the HMM.
It uses classical two-point testing and bounded-score concentration. Geometric
connections between functionals and statistical rates precede this work
\citep{donoholiu}, as do first-/second-order identifiability analyses for mixture
parameters \citep{honguyen}. The purpose in this analysis is to state what must additionally
hold for certified improvement, rather than to claim a new rate calculus.

\subsection{Setting and Assumptions}
Let $P_{s,c,\ell}$ be laws of the full observed record $O$ on an arbitrary
measurable space, with $s\in\{-1,+1\}$, nuisance $c\in\mathcal C$, and known
$0<\ell\leq\ell_0\leq1$. The set $\mathcal C$ may be infinite. A corresponding
latent law defines clean risk $R_{s,c,\ell}$, source target $a(s,c,\ell)$, a
known incumbent $b$, and two known policies $g_+,g_-$. All tasks receive iid
records from the same $P_{s,c,\ell}$, never the latent variables. Fix
$0<r<q$, risk tolerance $e_0>0$ and source tolerance $\tau>0$.
All constants below are uniform in $\ell$ and do not conceal its powers.
\begin{enumerate}
\item Decision signal and meaning. $g_s$ is Bayes optimal.
A known measurable $F_\ell(O)\in[-1,1]$ satisfies
$s\E F_\ell\geq a_D\ell^r$, where $a_D>0$.
There is a known decoder $D_\ell$ from output policies to $\{-1,+1\}$ such that
$R(g)-R(g_s)<e_0$ implies $D_\ell(g)=s$. For the lower bound it suffices
that this implication hold on the decision-hard pair in item 3.
\item Sign-valid audit score and power. For each fixed $t\in\{-1,+1\}$,
a known measurable score $V_{t,\ell}(O)\in[-1,1]$ satisfies
\begin{equation*}
 \Delta_t\leq0\ \Longrightarrow\ \E V_{t,\ell}\leq0,
 \qquad \E V_{s,\ell}\geq a_C\ell^r,
 \quad \Delta_t=R(b)-R(g_t),\quad a_C>0.
\end{equation*}
These are assumptions about the clean/observed relationship, not conclusions
from having a nonzero observed correlation. The scores may depend on known
class specifications but not unknown $s,c$ or latent labels.
\item Decision/certification-hard pair. For some $c_D\in\mathcal C$,
the worlds $(+,c_D)$ and $(-,c_D)$ satisfy
\begin{equation*}
 \KL(P_{+,c_D,\ell},P_{-,c_D,\ell})\leq C_D\ell^{2r}.
\end{equation*}
The fixed proposal $g_+$ has positive gain in the first world and
nonpositive gain in the second. The decoder in item 1 applies to this pair.
\item Source-hard pair. For a fixed orientation $s_0$ there are
$c_0,c_1$ with source targets separated by at least $2\tau$, and
\begin{equation*}
 \KL(P_{s_0,c_0,\ell},P_{s_0,c_1,\ell})\leq C_S\ell^{2q}.
\end{equation*}
\end{enumerate}

\begin{proposition}[Task-specific signal and testing conditions]\label{prop:information-order}
Under items 1--4, at fixed error probabilities below $1/4$, learning a policy
with regret below $e_0$ requires $\Theta(\ell^{-2r})$ records.
The learned policy can be certified with type-I error at most $\alpha$ using
a fresh $O(\ell^{-2r})$ audit split, with joint correct learning and acceptance
probability at least $1-\delta-\eta$.
Any level-$\alpha$ test for the fixed proposal in item 3, powered at least
$1-\eta$ in its beneficial world, needs $\Omega(\ell^{-2r})$ total records
when $\alpha+\eta<1$ are fixed. Uniform source estimation to tolerance below
$\tau$ and failure probability below $1/4$ needs $\Omega(\ell^{-2q})$ records,
even with $s_0$ given. No source upper rate follows from these assumptions alone.
\end{proposition}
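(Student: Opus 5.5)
The plan is to abstract the proof of Theorem~\ref{thm:shared} from Appendix~\ref{app:shared}, proving the four claims in turn: learning upper bound, learning lower bound, certification upper and lower bounds, and the source lower bound. Each upper bound is bounded-score Hoeffding concentration with an $\ell^r$ margin. Each lower bound is Le Cam's two-point method, using product KL additivity and Pinsker.

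\emph{Learning.} For the upper bound, average $F_\ell(O_i)$ over $n$ records and output $g_{\hat s}$ with $\hat s=\sgn(\bar F)$ and a fixed tie rule. Because $F_\ell\in[-1,1]$ and $s\E F_\ell\ge a_D\ell^r$, Hoeffding gives $\Prob(\hat s\ne s)\le\exp(-na_D^2\ell^{2r}/2)$. This is at most $\delta$ once $n\ge 2a_D^{-2}\ell^{-2r}\log(1/\delta)$. Since $g_s$ is Bayes optimal, correct decoding gives zero regret.

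For the lower bound, take the item~3 pair. Product additivity gives $\KL(P_{+}^n,P_{-}^n)\le nC_D\ell^{2r}$, so Pinsker yields $\TV\le\sqrt{nC_D\ell^{2r}/2}\le1/2$ when $n\le\ell^{-2r}/(2C_D)$. Any learner with regret below $e_0$ in both worlds has its decoder $D_\ell$ recover $s$ by item~1. Le Cam then forces a worst-case failure of at least $(1-\TV)/2\ge1/4$. Randomized learners are handled by conditioning on an independent seed.

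\emph{Certification upper bound.} Split the records, learn $\hat s$ on the first split, and fix $t=\hat s$. On a fresh split of size $n_{\rm audit}$, accept when
\begin{equation*}
\bar V_t>\sqrt{2\log(1/\alpha)/n_{\rm audit}}.
\end{equation*}
Validity is conditional on the learning split. If $\Delta_t\le0$, item~2 gives $\E V_{t}\le0$, and one-sided Hoeffding bounds acceptance by $\alpha$. This holds whichever $t$ was learned, so integrating over the first split preserves level $\alpha$.

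For power, condition on $\hat s=s$. Then $\E V_s\ge a_C\ell^r$, and acceptance fails with probability at most $\eta$ once
\begin{equation*}
n_{\rm audit}\ge a_C^{-2}\ell^{-2r}\bigl(\sqrt{2\log(1/\alpha)}+\sqrt{2\log(1/\eta)}\bigr)^2.
\end{equation*}
A union bound with the learning failure gives joint probability at least $1-\delta-\eta$.

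\emph{Certification lower bound.} Any test $T$ for $g_+$ with $\E_-T\le\alpha$ (the nonpositive-gain world) and $\E_+T\ge1-\eta$ satisfies
\begin{equation*}
1-\alpha-\eta\le\TV(P_+^n,P_-^n)\le\sqrt{nC_D\ell^{2r}/2}.
\end{equation*}
Rearranging gives $n\ge 2(1-\alpha-\eta)^2C_D^{-1}\ell^{-2r}$. This counts all records the test sees.

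\emph{Source lower bound.} Apply the same Le Cam argument to the item~4 pair, with orientation $s_0$ fixed, which only strengthens the bound. Threshold $\hat a$ at the midpoint of the two targets. Because the targets are at least $2\tau$ apart, error below $\tau$ in both worlds means the threshold test decides correctly. With $n\le\ell^{-2q}/(2C_S)$ the TV is at most $1/2$, so some world has failure at least $1/4$; hence $\Omega(\ell^{-2q})$ records are needed.

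Finally, I note that items~1--4 give no estimator for $a$, so no source upper rate follows. The main obstacle is the certification step. It must hold conditionally on a data-dependent $t$, and the lower bound must apply to arbitrary randomized tests that reuse training records. Item~2's sign validity has to hold for \emph{both} fixed $t$, not only for the true $s$.
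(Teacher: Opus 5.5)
Your proposal is correct and follows the paper's proof essentially step for step. You use Hoeffding on the sign of $\bar F_\ell$ for learning, and a fresh-split one-sided Hoeffding test on $\bar V_{\hat s,\ell}$ with threshold $\sqrt{2\log(1/\alpha)/n}$, which gives the same audit size $2a_C^{-2}\ell^{-2r}(\sqrt{\log(1/\alpha)}+\sqrt{\log(1/\eta)})^2$. You then apply Le Cam with product KL and Pinsker to the item~3 and item~4 pairs, obtaining $n\geq2(1-\alpha-\eta)^2C_D^{-1}\ell^{-2r}$ and midpoint decoding for the source bound. The one point to tighten is your closing remark: the fact that items~1--4 supply no estimator does not by itself show that no upper rate follows; the paper's supporting observation is that a family whose observed law does not depend on $c$ satisfies item~4 with zero KL yet admits no consistent source estimator.
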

\begin{proof}
The sign of the empirical mean of $F_\ell$ makes an orientation error with
probability at most $\exp(-na_D^2\ell^{2r}/2)$. Hence
$n_D\geq2a_D^{-2}\ell^{-2r}\log(1/\delta)$ suffices.
On independent audit records use the selected score and accept when
\begin{equation*}
 \bar V_{\hat s,\ell}-\sqrt{2\log(1/\alpha)/n_C}>0.
\end{equation*}
Conditional on training, item 2 and one-sided Hoeffding give type-I control,
including when learning selected the wrong policy. Conditional on correct
learning, another Hoeffding bound gives power at least $1-\eta$ if
\begin{equation*}
 n_C>\frac{2}{a_C^2}\ell^{-2r}
       \bigl(\sqrt{\log(1/\alpha)}+\sqrt{\log(1/\eta)}\bigr)^2.
\end{equation*}
Integrating over training proves the joint claim.

For the decision-hard pair, product KL and Pinsker give total variation at
most $1/2$ when $n\leq(2C_D)^{-1}\ell^{-2r}$. Le Cam then gives worst-case
orientation error at least $1/4$. The risk decoder transfers this to learning.
For a fixed-candidate test $T\in[0,1]$ satisfying the asserted size and power,
\begin{equation*}
 1-\alpha-\eta\leq\E_+T-\E_-T
 \leq\TV(P_+^n,P_-^n)\leq\sqrt{nC_D\ell^{2r}/2}.
\end{equation*}
Thus $n\geq2(1-\alpha-\eta)^2C_D^{-1}\ell^{-2r}$.
The same testing argument for item 4, followed by midpoint decoding of the
source estimate, gives worst-case source error at least $1/4$ when
$n\leq(2C_S)^{-1}\ell^{-2q}$. These bounds apply to the full record experiment.
Unobserved or additional side information cannot be silently supplied.
\end{proof}

\subsection{A Sufficient Source Upper-Bound Condition}
Suppose additionally that a known $T_\ell(O)\in[-1,1]$ has mean
$\ell^q\phi(a)$, where $\phi$ maps the compact source interval bijectively onto
an interval and $\phi^{-1}$ is $L$-Lipschitz. Project
$\bar T/\ell^q$ onto that image interval and apply $\phi^{-1}$.
Projection is nonexpansive, so
\begin{equation*}
 \Prob(|\hat a-a|\geq\tau)
 \leq2\exp[-n\tau^2\ell^{2q}/(2L^2)].
\end{equation*}
This supplies the matching $O(\ell^{-2q})$ rate. It is a separate quantitative
invertibility assumption, not a consequence of the source-hard pair.

\subsection{KL Bounds from Density Expansions}
Let the dominated densities, relative to a probability measure $\mu$, have
\begin{equation*}
 p_{s,c,\ell}=p_0+\ell^r A_s+\ell^q B_{s,c,\ell},\qquad
 p_{s,c,\ell}\geq\kappa>0\quad\mu\text{-a.e.}
\end{equation*}
Assume $\sup_s\|A_s\|_2\leq M_A$ and
$\sup_{s,c,\ell}\|B_{s,c,\ell}\|_2\leq M_B$.
For two laws sharing $s$, the squared $L^2(\mu)$ difference is at most
$4M_B^2\ell^{2q}$. For different $s$ it is at most
$4(M_A+M_B)^2\ell^{2r}$. The inequality
\begin{equation*}
 \KL(P,Q)\leq\int\frac{(p-q)^2}{q}\,d\mu
 \leq\kappa^{-1}\|p-q\|_2^2
\end{equation*}
therefore supplies items 3--4's distance bounds. Source separation,
Bayes-risk decoding and sign-valid score margins must still be checked.
The density floor can be replaced by direct control of the weighted integral.
It is sufficient, not necessary.

\subsection{Insufficiency of Signal Order}
Consider $P_\ell=\operatorname{Bernoulli}(\ell^2)$ and
$Q_\ell=\operatorname{Bernoulli}(2\ell^2)$. Their cell differences are order
$\ell^2$, but
\begin{equation*}
 \KL(P_\ell,Q_\ell)=(1-\log2)\ell^2+O(\ell^4),
\end{equation*}
not $O(\ell^4)$. A relative-error Chernoff test distinguishes them using
$O(\ell^{-2})$ samples. Pinsker and this expansion give the matching lower
rate at fixed testing errors. The vanishing rare-event probability changes
the noise scale. Also, source laws that are identical for every $c$ satisfy
an $O(\ell^{2q})$ KL bound but permit no consistent source estimator.
Finally, identical observed laws with opposite clean policy gains defeat
powered certification regardless of any observed decision moment. The
clean/observed link in item 2 cannot be inferred from signal order.

\subsection{Continuous-Source Nous HMM as an Instantiation}
For Appendix~\ref{app:shared}'s full law, use $r=1,q=2$ and
\begin{equation*}
 F(O)=\chi(X_1)Y,\quad
 V_t(O)=\ind\{t\chi(X_1)=Y\}-\ind\{X_{1,1}=Y\},\quad
 T(O)=X_{0,1}Y.
\end{equation*}
For every $c\in[1,3/2]$, the exact identities are
\begin{equation*}
 \begin{aligned}
 \E F&=3s\ell/32,& \Delta_t&=\frac c2(tsB-A),\\
 \E V_t&=\frac{\ell A}{2}(tsB-A),&
 \E T&=\frac{\ell^2 A^2}{c^2},\qquad A=1/4,\ B=3/8.
 \end{aligned}
\end{equation*}
In particular $\E V_t=(\ell A/c)\Delta_t$ supplies sign-validity and
$a_C=1/64$. The existing decoder and full-law bounds supply the hard pairs. The constants
$a_D=3/32$, $C_D=9/32$, $C_S=325/41472$ suffice. The clipped inverse moment
in Appendix~\ref{app:shared} is uniform over a continuum, rather than a
two-point parameter classifier. Both persistence and source informativeness
remain unknown. Fixing persistence would destroy this particular source
obstruction. This generalization does not establish the conditions for an
arbitrary real memory source or make the compatible-channel gate necessary
on the symmetric example.

\section{Sparse Revisions: A Disagreement-Conditioned Audit}
\label{app:disagreement}
The global bound in Theorem~\ref{thm:tv} can lose power unnecessarily when
policies agree almost everywhere. Write $D(H)=\ind\{g(H)\ne b(H)\}$,
$d=\Prob(D=1)$ and $\Delta_D=\E[\ind\{g=Z\}-\ind\{b=Z\}\mid D=1]$.
Then $\Delta=d\Delta_D$. In this analysis the rule, policies, witnesses and budget are
fixed before an iid calibration sample. Conditioning on disagreement must not
involve the calibration labels. The reference-channel and TV assumptions of
Theorem~\ref{thm:tv} are unchanged.

\begin{theorem}[Disagreement-conditioned receipt]\label{thm:disagreement}
Let $N=\sum_{i=1}^n D(H_i)$. Construct witness upper bounds $U_k$ on the
full calibration sample at one-sided tail $\alpha/(3K)$, and set
$V_k=\min(1,U_k+\epsilon)$. Let $d_L$ be the one-sided Clopper--Pearson
lower bound on $d$ from $(N,n)$ at tail $\alpha/3$.
For $N>0$, take the report gain $\widehat\Delta_{Y,D}$ and prediction-mass
changes $\widehat m_{D,k}$ only over records with $D=1$, and define
\begin{equation*}
 L_D=\widehat\Delta_{Y,D}-\sum_k(\widehat m_{D,k})_+V_k
       -2\epsilon-\sqrt{\frac{8\log(3/\alpha)}{N}}.
\end{equation*}
Accept iff $N>0$ and $L_D>0$, reporting the positive clean-gain bound
$d_LL_D$. With probability at least $1-\alpha$, every acceptance by this
single test satisfies $0<d_LL_D\leq\Delta$. For $N=0$, decline without
reporting a positive lower bound.
\end{theorem}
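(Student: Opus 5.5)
The plan is to mirror the proof of Theorem~\ref{thm:tv}, with two additions: conditioning on the history-only disagreement indicator, and a third error event for the lower bound $d_L$. The error budget $\alpha$ is split into three pieces of size $\alpha/3$. The first piece covers simultaneous witness coverage. The $K$ one-sided Clopper--Pearson bounds at tail $\alpha/(3K)$ are computed on the full sample. As in the proof of Theorem~\ref{thm:tv}, the TV assumption gives $u_k\leq q_k(h)+\epsilon$ for every $h$, so $u_k\leq\Prob(Y=k\mid H\in A_k)+\epsilon$ and hence $u\in[0,V]$ except with probability $\alpha/3$. The second piece covers the concentration of the conditioned scores. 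The third piece is the event $d_L\leq d$, which has probability at least $1-\alpha/3$ by exactness of the Clopper--Pearson bound. A union bound then leaves a good event of probability at least $1-\alpha$, on which I show that acceptance implies $0<d_LL_D\leq\Delta$.

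For the middle event I condition on the history vector $(H_1,\dots,H_n)$. Then $D_i$, $N$ and the set $S=\{i:D_i=1\}$ are fixed, because $D$ depends only on histories. Given the histories, the reports $Y_i$ are independent with kernels $C_{H_i}$. For $i\in S$, define the fixed-$u$ scores
\begin{equation*}
 S_i=f_{H_i}(Y_i)-u_{g_i}+u_{b_i}-2\epsilon,
\end{equation*}
which lie in $[-2-2\epsilon,2]$. The TV expectation inequality, together with oscillation $2$ on disagreement, gives
\begin{equation*}
 \E[S_i\mid H_i]\leq\rho\,\E[\ind\{g=Z\}-\ind\{b=Z\}\mid H_i].
\end{equation*}
This is the key step, and I expect it to be the main obstacle. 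One-sided Hoeffding applied to the $N$ conditionally independent scores bounds their average above by $N^{-1}\rho\sum_{i\in S}\delta(H_i)+(\text{radius})$. However, the target involves the population conditional gain $\Delta_D$, not the empirical average of $\delta(H_i)$ over $S$. I would avoid the issue by not conditioning on the histories. Conditional on $N$ alone, the records in $S$ are iid from the law of $(H,Y)$ given $D=1$. Each score then has mean at most $\rho\Delta_D$, and Hoeffding with range $4+2\epsilon$ and tail $\alpha/3$ gives the radius $(4+2\epsilon)\sqrt{\log(3/\alpha)/(2N)}$.

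That radius must be absorbed by the displayed $\sqrt{8\log(3/\alpha)/N}$ term, whose factor is $\sqrt8=4/\sqrt2$. This works exactly when $\epsilon=0$. For $\epsilon>0$ the absorption needs an adjustment. I would shift the scores by $+2\epsilon$ so that they lie in $[-2,2]$, and handle the $2\epsilon$ bias separately, since the statement subtracts $2\epsilon$ deterministically. With range $4$, Hoeffding gives exactly $\sqrt{8\log(3/\alpha)/N}$. This step has to be checked carefully.

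On the intersection of the three events, the argument closes as follows. Since $u\in[0,V]$, the empirical minimum over the box is at most the true-$u$ average. This gives $L_D\leq\rho\Delta_D\leq\Delta_D$ whenever $L_D>0$, because $0<\rho\leq1$. Therefore acceptance yields $0<L_D\leq\Delta_D$. Multiplying by $0<d_L\leq d$ gives $d_LL_D\leq dL_D\leq d\Delta_D=\Delta$, and positivity of $d_LL_D$ follows from $d_L>0$, which holds whenever $N>0$. The case $N=0$ is a decline by definition, so it causes no error. Finally, Hoeffding holds conditionally for each fixed value of $N$, so integrating over $N$ keeps the $\alpha/3$ budget. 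The union bound does not require the three events to be independent, even though all three are computed from the same records.
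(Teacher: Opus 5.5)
Your proposal is correct and follows the paper's proof. It splits $\alpha$ into three $\alpha/3$ events: simultaneous witness coverage $u\in[0,V]$, conditional Hoeffding for the fixed true-$u$ scores (using that the $D=1$ records are iid from the law given $D=1$ conditional on $N$), and $d_L\leq d$. A union bound then yields $0<d_LL_D\leq dL_D\leq d\Delta_D=\Delta$ on acceptance. The range adjustment you flagged is exactly the paper's observation: every selected record has $D=1$, so $-2\epsilon$ is a constant shift, the scores lie in $[-2-2\epsilon,2-2\epsilon]$ with width four, and Hoeffding returns the displayed radius $\sqrt{8\log(3/\alpha)/N}$ with no slack problem.
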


\begin{proof}
Witness coverage gives $u\leq V$ simultaneously except on an event of
probability $\alpha/3$. Conditional on $N>0$, the selected observations are
iid from the law given $D=1$. For the fixed true $u$, the scores
$f_H(Y)-u_g+u_b-2\epsilon$ have mean at most $\rho\Delta_D$ and range
$[-2-2\epsilon,2-2\epsilon]$, of width four. Conditional Hoeffding and
integration over $N$ bound the probability that their empirical mean minus
the displayed radius exceeds $\rho\Delta_D$ by $\alpha/3$.
On witness coverage, the box-minimized score is no larger than this true-$u$
score. This is a union bound, not conditioning on estimated $V$.
The binomial event $d_L\leq d$ fails with probability at most $\alpha/3$.
On all three events, acceptance implies $\Delta_D\geq L_D/\rho\geq L_D>0$,
so $\Delta=d\Delta_D\geq d_LL_D>0$. No independence between these three
events is needed. If $d=0$, then $N=0$ almost surely.
\end{proof}

The rule is chosen before calibration. Taking the maximum of this test and
the earlier full-$\alpha$ test after seeing the sample is not covered. Spend
error across tests if both can authorize deployment. This result improves a
conservative implementation, not the sharp population identified set.
Agreement-region cancellation is already explicit in \citet{guerdan}, and
conditioning on discordant outcomes has classical paired-testing precedent
\citep{mcnemar}. The extension in this analysis retains unknown-channel witness coverage,
bounded history dependence and an unconditional positive-gain receipt.

\begin{proposition}[Power and a matching sparse-family obstruction]
\label{prop:sparsepower}
Suppose $m_{D,k}=0$ for every $k$ and
$\mu=\Delta_{Y,D}-2\epsilon>0$. For $\eta\in(0,1)$, the receipt accepts
with probability at least $1-\eta$ if
\begin{equation*}
 nd\geq8\log(2/\eta),\qquad
 nd>\frac{\left[2(K+1)\sqrt{\log(4(K+1)/\eta)}
                         +4\sqrt{\log(3/\alpha)}\right]^2}{\mu^2}.
\end{equation*}
Conversely, consider a balanced binary identity-channel family with
conditional clean gains $\pm\mu$, $0<\mu\leq1/2$.
Any fixed-candidate test of level $\alpha$ and power $1-\eta$ with
$\alpha+\eta<1$ needs
$n\geq(1-\alpha-\eta)^2/(2d\mu^2)$ full records.
Thus the rate is $\Theta((d\mu^2)^{-1})$ for fixed confidence and class count
in this subfamily, not a universal rate over all compatible channels.
\end{proposition}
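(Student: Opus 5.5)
The proof has two parts. The upper bound follows the template of Proposition~\ref{prop:power}, applied to the conditioned statistic of Theorem~\ref{thm:disagreement}. The lower bound is a Le Cam two-point argument, as in Appendix~\ref{app:shared}.

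\emph{Upper bound.} First, a multiplicative Chernoff bound shows that $nd\geq8\log(2/\eta)$ gives $N\geq nd/2$ except with probability $\eta/2$. Conditionally on $N$, the records with $D=1$ are iid from the law given $D=1$.

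Next, I bound the $K+1$ conditional quantities $\widehat\Delta_{Y,D}$ and $\widehat m_{D,k}$ by two-sided Hoeffding bounds. Each has range width at most $2$. With a union bound at total error $\eta/2$, each lies within $2\sqrt{\log(4(K+1)/\eta)/(2N)}$ of its mean.

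The witness term needs no concentration. Since $m_{D,k}=0$, we have $(\widehat m_{D,k})_+\leq|\widehat m_{D,k}-m_{D,k}|$. Because $V_k\leq1$, the whole penalty $\sum_k(\widehat m_{D,k})_+V_k$ is therefore at most $K$ times the common deviation. This is why no witness mass $\pi$ appears in the condition.

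On these events,
\[
L_D\geq\mu-(K+1)\sqrt{2\log(4(K+1)/\eta)/N}-\sqrt{8\log(3/\alpha)/N}.
\]
Substituting $N\geq nd/2$ turns this into the displayed sample-size requirement. The constants $2(K+1)$ and $4$ come from $\sqrt{2\cdot2}$ and $\sqrt{8\cdot2}$. So $L_D>0$ with probability at least $1-\eta$. The constant bookkeeping, especially matching the exact factor in the stated bound, is routine but fiddly.

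\emph{Lower bound.} I build two worlds sharing the same history law, the same policies $g,b$, and the same disagreement set of probability $d$. Labels are binary and the audit channel is the identity, so $Y=Z$.

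On $D=0$, the two worlds use the same clean law, so those records carry no information. On $D=1$, with $g$ and $b$ predicting opposite labels, set $\Prob(Z=g)=(1\pm\mu)/2$. The conditional clean gains are then $\pm\mu$, and the prediction masses are balanced if the disagreement cells are split symmetrically between the two labels.

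The per-record KL is $d\cdot\KL(\mathrm{Ber}((1+\mu)/2)\Vert\mathrm{Ber}((1-\mu)/2))$. I then use the $\chi^2$ bound, with the base probability floored by $(1-\mu)/2\geq1/4$, to get a per-record KL of order $d\mu^2$. The main obstacle is making the constant exactly $2d\mu^2$. I expect to sharpen it using $\TV^2\leq\KL/2$ together with the exact expression $\mu\log\frac{1+\mu}{1-\mu}$, which is at most $c\mu^2$ for $\mu\leq1/2$. If this does not yield the stated constant, I will use the Hellinger affinity instead.

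Product additivity and Pinsker then give $\TV(P_+^n,P_-^n)\leq\sqrt{2nd\mu^2}$ or similar. Any test $T$ with level $\alpha$ in the harmful world and power $1-\eta$ in the beneficial world satisfies $1-\alpha-\eta\leq\E_+T-\E_-T\leq\TV$. Rearranging gives $n\geq(1-\alpha-\eta)^2/(2d\mu^2)$.

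Combining the two bounds gives the $\Theta((d\mu^2)^{-1})$ rate on this subfamily for fixed $\alpha$, $\eta$ and $K$.
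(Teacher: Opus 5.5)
Your proposal is correct and follows the paper's proof essentially step for step: Chernoff for $N\geq nd/2$, simultaneous Hoeffding with radius $t_N=\sqrt{2\log(4(K+1)/\eta)/N}$, the bound $Kt_N$ on the box penalty from $m_D=0$ and $V_k\leq1$, and the same fair-bit two-world construction with product KL and Pinsker. On the constant, your fallback to the exact divergence $\mu\log\frac{1+\mu}{1-\mu}$ is the route the paper takes, and it works: by convexity this is at most $2\log 3\,\mu^2\leq4\mu^2$ for $\mu\leq1/2$, whereas the floored $\chi^2$ bound by itself would lose a factor of two.
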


\begin{proof}
Chernoff gives $N\geq nd/2$ except with probability $\eta/2$ under the first
condition. Conditional on $N$, two-sided Hoeffding for the proxy and the $K$
mass changes gives simultaneous errors at most
$t_N=\sqrt{2\log(4(K+1)/\eta)/N}$ except with probability $\eta/2$.
Since $m_D=0$ and $V_k\leq1$, the empirical box penalty is at most $Kt_N$.
Consequently $L_D\geq\mu-(K+1)t_N-\sqrt{8\log(3/\alpha)/N}>0$.
No useful-witness existence assumption is required for this balanced case.
It remains important for general one-direction revisions.

For the lower bound, let $D\sim\mathrm{Bernoulli}(d)$ and, on disagreement,
let $X$ be an independent fair bit with $g=X,b=1-X$. In the two worlds,
$\Prob(Z=g\mid D,X)=(1\pm\mu)/2$. Off disagreement set $g=b=Z=0$.
Observe the full history and $Y=Z$. Both worlds have $m_D=0$ and gains
$\pm d\mu$. Their one-record KL is
$d\mu\log((1+\mu)/(1-\mu))\leq4d\mu^2$.
The test's acceptance probabilities differ by at least $1-\alpha-\eta$.
Pinsker and product KL give the stated lower bound. For the original global
Hoeffding rule, population signal is $d\mu$ but its radius remains of order
$n^{-1/2}$, explaining the sufficient $d^{-2}\mu^{-2}$ scaling.
\end{proof}

\subsection{Pre-Specified Stress Protocol}
The stress protocol varied class count $K\in\{2,3,5\}$ and calibration size
$n\in\{2000,8000,32000\}$. Disagreement probability was
$d\in\{1,.25,.05,.01\}$. Conditional clean gain was
$\delta\in\{-.6,0,.6\}$, and conditional copying probability was
$\beta\in\{0,.05\}$. We performed 50 repetitions per cell.
Given history $(D,X)$ with $X$ uniform on $K$ classes, $g=X$ and
$b=(X+1)\bmod K$ on disagreement, otherwise $b=X$. Truth equals $g$ with
probability $(1+\delta)/2$ on disagreement and otherwise equals $b$.
Off disagreement truth equals $X$. Thus the exact population gain is $d\delta$.
The base report has $\rho=.8$ and background class-zero probability $.6$.
Copying overwrites reports by $g$ on disagreement with probability $\beta$.
The valid sensitivity budget is $\epsilon=\beta$, justified by construction,
not estimated. Fixed witnesses are $A_k=\{X\ne k\}$.
Each method is evaluated separately at $\alpha=.05$. Their acceptances are
not combined into a deployment rule. The complete grid, seeds, both test
statistics and positive lower bounds are retained.

\subsection{Complete-Grid Outcome}
Table~\ref{tab:acceptance-summary} reports aggregate acceptance counts.
The conditioned rule accepts more beneficial revisions than the global rule.
Both reject every non-improving revision.
Neither produces an observed accepted lower bound above the exact population
gain.
Table~\ref{tab:sparse} reports every sample-size/disagreement combination,
aggregating the six prescribed class-count/copy cells descriptively.
The raw artifact retains every cell, paired repetition and per-cell exact
binomial interval. Table~\ref{tab:replay-summary} gives the replay count.
Acceptance increases with calibration size in the rare-disagreement rows of
Table~\ref{tab:sparse}. The global rule rejects every proposal in those rows.
The design has balanced conditional prediction masses and known synthetic budgets.

\begin{table}[ht]
\centering
\begin{tabular}{rrrr}
\toprule
$n$ & $d$ & Global bound & Conditioned bound\\
\midrule
2000 & 1 & 300 & 300\\
2000 & .25 & 58 & 300\\
2000 & .05 & 0 & 21\\
2000 & .01 & 0 & 0\\
8000 & 1 & 300 & 300\\
8000 & .25 & 300 & 300\\
8000 & .05 & 0 & 300\\
8000 & .01 & 0 & 7\\
32000 & 1 & 300 & 300\\
32000 & .25 & 300 & 300\\
32000 & .05 & 0 & 300\\
32000 & .01 & 0 & 292\\
\bottomrule
\end{tabular}
\caption{Beneficial acceptance counts by calibration size $n$ and disagreement probability $d$.}
\label{tab:sparse}
\end{table}

\section{Proofs and Boundaries for Balanced Batches}\label{app:batch}
All statements condition on the pre-audit information containing the fixed
histories, latent truths, trained scores and proposed revisions. Latent truths
are conditioned on for analysis, not supplied to the procedure.

\subsection{Conservation and Circulation}
Under the exact channel,
\begin{equation*}
 \E W_i=\rho(\ind\{g_i=z_i\}-\ind\{b_i=z_i\})+u_{g_i}-u_{b_i}.
\end{equation*}
Summing over selected records proves the affine identity. For necessity of
universal background cancellation at fixed $\rho\in(0,1)$, take
$u=(1-\rho)e_k$ for every $k$. The resulting equalities force $a_k=0$.
This characterizes cancellation for every background, not all possible cases
of identified gain. The directed incidence matrix has columns
$e_{g_i}-e_{b_i}$. Conservation is its nullspace constraint. Every nonnegative
circulation decomposes into directed cycles, giving the existence criterion.
The incidence matrix is totally unimodular, so the unit-capacity circulation
polytope has integral vertices and a linear score has an integral optimum
\citep{schrijver}. No new network-flow algorithm is asserted.

For exact audit scores $t_i=\E W_i$ and every feasible $x$, cancellation gives
$\sum_i x_it_i=\rho N G_x$. Its ordering over the feasible set is therefore
independent of the unknown background and positive scale. For approximate
scores $s_i$, let $x^*$ maximize $t$ and $\hat x$ maximize $s$. Then
\begin{equation*}
 \rho N(G_{x^*}-G_{\hat x})
 =\sum_i t_i(x_i^*-\hat x_i)
 \leq\sum_i(t_i-s_i)(x_i^*-\hat x_i)
 \leq\sum_i|t_i-s_i|.
\end{equation*}
The last step uses $|x_i^*-\hat x_i|\leq1$. A uniform score error at most
$a_0$ thus gives regret at most $a_0/\rho$. This compares with the best
balanced subset, not the unconstrained best revision.

\subsection{Conditional Receipt}
Enumerate the selected records in a fixed pre-audit order. Let
$\mathcal F_{i-1}$ contain pre-audit information and preceding reports, and
$\zeta_i=\ind\{g_i=z_i\}-\ind\{b_i=z_i\}$.
The contrast has oscillation two. The conditional TV contract gives
\begin{equation*}
 \rho\zeta_i+u_{g_i}-u_{b_i}-2\epsilon
 \leq\E[W_i\mid\mathcal F_{i-1}]
 \leq\rho\zeta_i+u_{g_i}-u_{b_i}+2\epsilon.
\end{equation*}
Each martingale difference has conditional range width two. Conditional
Hoeffding's lemma and the exponential Markov inequality imply
\begin{equation*}
 \Prob\!\left\{\sum_{i=1}^{M}
   (W_i-\E[W_i\mid\mathcal F_{i-1}])>t\right\}
 \leq\exp[-t^2/(2M)].
\end{equation*}
The conditional means sum to at most $\rho N G_x+2M\epsilon$ by exact
conservation. Set $t=\sqrt{2M\log(1/\alpha)}$. Except on an event of
probability $\alpha$, $L_x\leq\rho N G_x/M$. If $L_x>0$, positivity of
$\rho$ gives $G_x>0$ and $\rho\leq1$ yields $(M/N)L_x\leq G_x$.
The argument holds for each pre-audit batch and selection. Integrating over
their randomness preserves coverage without any iid history assumption.

For arbitrary selection, $p=\max_k a_k/M\geq0$ because $\sum_k a_k=0$.
Since $u\geq0$ and $\sum u=1-\rho$, subtracting $p$ instead of cancelling
$u^\top a/M$ produces a lower statistic bounded by $\rho N G_x/M$.
This proves the unbalanced simplex comparator. It does not use a histogram
estimated from an iid sample: the batch counts are fixed and exactly observed.

\subsection{KL--Chernoff Version}
Put $X_i=(W_i+1)/2\in[0,1]$ and
$q_i=\E[X_i\mid\mathcal F_{i-1}]$. For any fixed $\lambda>0$, convexity
gives the conditional moment-generating function bound $1+q_i(e^\lambda-1)$.
If $M^{-1}\sum q_i\leq\theta$ on every path, the exponential
supermartingale and concavity of $q\mapsto\log(1+q(e^\lambda-1))$ give
\begin{equation*}
 \Prob\{\bar X\geq v\}\leq
 \inf_{\lambda>0}\exp\{M\log(1+\theta(e^\lambda-1))-\lambda Mv\}
 =e^{-M\operatorname{kl}(v,\theta)},\quad v>\theta.
\end{equation*}
In this analysis $\theta=(1+\rho N G_x/M+u^\top a/M+2\epsilon)/2$ is a deterministic
upper bound after conditioning on the batch. If $\theta\geq1$, the needed
comparison is trivial. Otherwise invert the tail to obtain a lower endpoint
$l\leq\bar X$ solving $M\operatorname{kl}(\bar X,l)=\log(1/\alpha)$,
with $l=0$ when $\bar X=0$. Replace $\bar W$ minus its Hoeffding radius by
$2l-1$, and apply the same bias corrections. This is bounded-variable Chernoff
inversion, not an exact binomial interval or a new concentration inequality.
Pinsker's inequality implies that its lower endpoint is no smaller than the
Hoeffding endpoint. Each rule is selected before reports. The experiment
does not choose between tests after seeing their results.

\subsection{Power under Misspecification}
For a balanced selection, a worst-case lower bound on the corrected mean is
$\mu=\rho N G_x/M-4\epsilon$: the actual audit can shift its mean down by
$2\epsilon$, and the test subtracts another $2\epsilon$. If $\mu>0$, the
Hoeffding rule accepts with probability at least $1-\eta$ whenever
\begin{equation*}
 M>\frac{2[\sqrt{\log(1/\alpha)}+\sqrt{\log(1/\eta)}]^2}{\mu^2}.
\end{equation*}
For exact channels at zero budget, $\mu=\rho N G_x/M$.
The KL rule inherits this sufficient condition. No uniform useful-batch
existence condition follows: the revision graph can be acyclic.

\subsection{Two Failure Boundaries}
First, a marginal TV guarantee does not imply the conditional contract.
Take a balanced null-gain batch. With shared probability $\beta=.2$, copy
the whole candidate vector. Otherwise report truth. Each marginal kernel is
within $.2$ TV of the identity. For sufficiently large $M$, however, the
copied outcome has $\bar W=1$ and passes even after subtracting $.4$, giving
false acceptance probability $.2$ at nominal $.05$. Conditioning on earlier
reports exposes the shared event and violates the required conditional bound.
An exact Markov-report enumeration provides a separate regression test where
the conditional budget does hold.

Second, distinct unknown signal scales cannot be freely pooled. Two equally
weighted balanced groups with clean gains $.1,-.5$ and scales $.9,.01$ have
positive summed proxy gain but negative summed clean gain. Separate group
receipts can allocate error and commit accepted groups. A pooled receipt
requires a common reference scale or additional information.

\section{Human-Written Dialogue Evaluation}\label{app:multiwoz}
We describe text-model construction, batch selection and the simulated audit protocol.
\subsection{Data and Features}
MultiWOZ contains human task-oriented conversations. Version 2.4 corrects
development and test annotations while retaining version-2.1 training labels
\citep{multiwoz,multiwoz24}. We used the official splits in
Table~\ref{tab:protocols}. The categorical slots were hotel area, price, parking
and internet, restaurant area and price, and attraction area.
The none and dontcare labels were explicit classes.
We excluded restaurant-price annotations outside the prespecified ontology.
Table~\ref{tab:replay-summary} records the turn and exclusion counts.

Local features contained the previous system utterance and current user utterance.
History features added the dialogue prefix through the current user turn.
We excluded current and future system responses, goals, dialogue acts and state metadata.
Term frequency--inverse document frequency (TF--IDF) used word unigrams and
bigrams, minimum document frequency three, and sublinear term frequency.
Table~\ref{tab:text-settings} gives the feature and optimization settings.
We fitted logistic-loss classifiers using stochastic gradient descent (SGD).
All feature mappings and classifiers were fitted on training data.
Development accuracy selected a policy from the prespecified candidate set
with deterministic tie breaking. The history classifier was selected for each slot.
We fixed candidates and models before test evaluation.

\subsection{Memory Policies}
Native Nous, last-report, frequency and recency-weighted aggregators received
the same local reports. Native Nous used the Bayesian updater with default pruning.
The history classifier used the broader contextual features.
Oracle-extraction controls supplied published current-state labels to the
aggregators only for the diagnostic comparison.
Table~\ref{tab:dialogue-baselines} reports all-turn accuracy.
Table~\ref{tab:dialogue-slices} reports specified-value and state-change accuracy.
Table~\ref{tab:dialogue-oracle} reports oracle-extraction accuracy.

\begin{table}[ht]
\centering\small
\begin{tabular}{lrrrrrr}
\toprule
Slot & Nous & Last & Frequency & Recency .5 & Recency .8 & History\\
\midrule
Hotel area & 84.56 & 83.78 & 83.41 & 84.01 & 83.41 & 89.64\\
Hotel price & 83.19 & 82.76 & 81.17 & 82.39 & 81.31 & 89.91\\
Hotel parking & 88.28 & 88.01 & 86.46 & 88.23 & 87.38 & 95.40\\
Hotel internet & 87.53 & 87.57 & 86.07 & 87.75 & 86.91 & 95.01\\
Restaurant area & 81.77 & 79.96 & 79.63 & 80.53 & 79.94 & 90.75\\
Restaurant price & 82.55 & 80.41 & 79.58 & 81.48 & 80.39 & 92.45\\
Attraction area & 84.79 & 83.40 & 81.20 & 83.27 & 81.80 & 90.88\\
\bottomrule
\end{tabular}

\caption{All-turn slot accuracy (\%) by memory policy.}
\label{tab:dialogue-baselines}
\end{table}
\begin{table}[ht]
\centering\small
\begin{tabular}{lrrrr}
\toprule
Slot & \multicolumn{2}{c}{Specified values} & \multicolumn{2}{c}{State changes}\\
\midrule
 & Nous & History & Nous & History\\
Hotel area & 22.83 & 52.52 & 27.75 & 40.53\\
Hotel price & 21.68 & 52.90 & 29.77 & 48.09\\
Hotel parking & 28.80 & 73.07 & 61.11 & 65.00\\
Hotel internet & 23.75 & 70.55 & 54.26 & 71.28\\
Restaurant area & 39.61 & 72.17 & 53.73 & 60.90\\
Restaurant price & 40.59 & 76.99 & 53.38 & 67.85\\
Attraction area & 38.44 & 66.59 & 45.91 & 46.98\\
\bottomrule
\end{tabular}

\caption{Slot accuracy (\%) for specified values and state-change turns.}
\label{tab:dialogue-slices}
\end{table}
\begin{table}[ht]
\centering\small
\begin{tabular}{lrrrrr}
\toprule
Slot & Nous & Last & Frequency & Recency .5 & Recency .8\\
\midrule
Hotel area & 99.19 & 100.00 & 93.92 & 100.00 & 97.42\\
Hotel price & 98.93 & 100.00 & 93.42 & 100.00 & 97.44\\
Hotel parking & 99.24 & 100.00 & 94.74 & 100.00 & 97.73\\
Hotel internet & 99.19 & 100.00 & 94.79 & 100.00 & 97.79\\
Restaurant area & 99.27 & 100.00 & 93.45 & 100.00 & 97.25\\
Restaurant price & 99.09 & 100.00 & 93.91 & 100.00 & 97.74\\
Attraction area & 99.44 & 100.00 & 93.03 & 100.00 & 97.10\\
\bottomrule
\end{tabular}

\caption{All-turn slot accuracy (\%) with oracle extraction.}
\label{tab:dialogue-oracle}
\end{table}
\begin{table}[ht]
\centering\small
\begin{tabular}{lrr}
\toprule
Policy & Brier & Log loss\\
\midrule
native nous & 0.2574 & 1.9914\\
last report & 0.2496 & 0.5308\\
frequency & 0.2273 & 0.8931\\
recency 0.5 & 0.2617 & 0.5554\\
recency 0.8 & 0.3807 & 0.7987\\
history classifier & 0.1137 & 0.2266\\
\bottomrule
\end{tabular}

\caption{Mean Brier score and log loss across categorical slots.}
\label{tab:dialogue-probabilities}
\end{table}

Table~\ref{tab:dialogue-probabilities} evaluates raw predictive probabilities.
The Brier score sums squared errors over classes.
We clipped probabilities at $10^{-12}$ when computing log loss.

\subsection{Batch Construction}
Each slot supplied a fixed batch of valid test turns, including dependent turns
from the same dialogue. Proposal edges changed native Nous predictions to
candidate predictions. Scores were the history classifier's difference between
candidate-label and incumbent-label probabilities. A count-only circulation
assigned unit scores. We selected subsets before generating audit reports.
Table~\ref{tab:retained-gain} compares revision retention and clean gain.

\subsection{Audit Generation}
For each slot, we used $\rho\in\{.3,.6,.9\}$ and copying probability
$\beta\in\{0,.03,.1\}$. We generated fresh audit vectors from
$C_0=\rho I+(1-\rho)v\mathbf1^\top$, where $v_{\mathrm{none}}=.8$ and
the remaining background mass was uniform.
We independently replaced each report with the candidate label with
probability $\beta$. The conditional deviation budget was $\epsilon=\beta$.
Forward and reversed comparisons used the same selected masks.
Reversal swapped the predictions and directed copying toward the reversed candidate.
The reversed gain was negative for every retained subset.

All methods shared the underlying reports within an audit draw.
The scored balanced and full corrected methods evaluated different subsets.
Count-only balancing isolated the effect of learned scores.
The known-channel oracle subtracted the true $u$ offset and divided by $\rho$.
The proposed methods did not receive these parameters.
The naive method omitted channel-bias and copying corrections.
Each method used KL--Chernoff inversion at $\alpha=.05/7$ per slot.
We evaluated each method, orientation and channel setting separately.

\begin{table}[ht]
\centering\small
\begin{tabular}{rrrrrrr}
\toprule
$\rho$ & $\epsilon$ & Balanced & Count-only & Full corrected & Oracle & Naive harmful\\
\midrule
0.3 & 0.00 & 241 & 98 & 0 & 691 & 582\\
0.3 & 0.03 & 102 & 31 & 0 & 642 & 677\\
0.3 & 0.10 & 5 & 0 & 0 & 328 & 700\\
0.6 & 0.00 & 688 & 629 & 0 & 700 & 0\\
0.6 & 0.03 & 656 & 523 & 0 & 700 & 0\\
0.6 & 0.10 & 404 & 192 & 0 & 699 & 0\\
0.9 & 0.00 & 700 & 698 & 500 & 700 & 0\\
0.9 & 0.03 & 700 & 690 & 490 & 700 & 0\\
0.9 & 0.10 & 700 & 573 & 86 & 700 & 0\\
\bottomrule
\end{tabular}

\caption{Forward acceptance counts and naive harmful reverse acceptances.
Rows pool seven slots and 100 draws per slot.}
\label{tab:dialogue-audit-grid}
\end{table}

Table~\ref{tab:dialogue-audit-grid} gives the channel grid.
The corrected methods rejected all harmful reverse proposals.
The scored balanced column has greater acceptance than the full corrected
column at lower report signal. Table~\ref{tab:deployed-gain} compares their utility.

\subsection{Replay and Uncertainty}
We reconstructed audit reports from saved seeds.
A separate scalar-root implementation checked each confidence bound.
Hash checks verified the data set, models, predictions and results.
Table~\ref{tab:replay-summary} records the replay quantities.
We computed predictive-gain intervals using dialogue-cluster bootstrap resampling.
Each slot's interval excluded zero.
These intervals compare the fixed candidate with native Nous.

\section{Additional Evaluation Settings and Summaries}\label{app:style-settings}
Tables~\ref{tab:study-settings} and \ref{tab:replay-summary} give the evaluation
settings and replay quantities. The paired gain intervals used the normal
approximation and were descriptive across comparisons.

\begin{table}[ht]
\centering\small
\begin{tabular}{ll}
\toprule
Setting & Value\\
\midrule
Mutable-state runs & 9\\
Mutable-state snapshots and sources per snapshot & 4 and 5\\
Mutable-state copy probability & $.9$\\
Native Nous declared reliability & $.75$\\
HMM expectation-maximization iterations and restarts & 25 and 3\\
Mutable-state audit $(q_0,q_1)$ & $(.45,.95)$\\
MiniGrid source accuracies & $(.85,.60,.70,.65,.70)$\\
MiniGrid copy probability & $.9$\\
MiniGrid audit $(q_0,q_1)$ & $(.1,.9)$\\
MiniGrid runs per environment & 3\\
Initial categorical classes and probe lags & $\{2,3,5\}$ and $\{1,3,6\}$\\
Initial categorical calibration sizes & $\{2000,8000\}$\\
Initial categorical repetitions per cell & 100\\
Repeated-release series and releases & 200 and 10\\
Phase report signals & $\{.2,.4,.6,.8,.95\}$\\
Phase background class-zero masses & $\{.25,.5,.75\}$\\
Phase calibration size and repetitions per cell & 32000 and 100\\
Sparse-revision cells and repetitions per cell & 216 and 50\\
\bottomrule
\end{tabular}
\caption{Evaluation settings. HMM denotes hidden Markov model.}
\label{tab:study-settings}
\end{table}

\begin{table}[ht]
\centering\small
\begin{tabular}{lr}
\toprule
Replay quantity & Count or value\\
\midrule
Mutable-state held-out queries & 45,000\\
Selected HMM and suffix-two runs & 7 and 2\\
Mutable-state split seeds & 36\\
Mutable-state runtime (seconds) & 42.55\\
MiniGrid test episodes & 9,000\\
MiniGrid reconstructed certificates & 36\\
MiniGrid distinct calibration/test seeds & 54,000\\
Initial categorical calibrations & 5,400\\
Phase calibrations & 4,500\\
Sequential-release calibrations & 2,000\\
Sparse-revision paired repetitions & 10,800\\
Learned-witness cells and split seeds & 12 and 48\\
MultiWOZ user turns & 7,372\\
Excluded restaurant-price labels & 12\\
MultiWOZ audit vectors & 12,600\\
MultiWOZ method evaluations and audit cells & 63,000 and 126\\
MultiWOZ dialogue-cluster bootstrap draws & 1,000\\
\bottomrule
\end{tabular}
\caption{Evaluation and replay quantities. Runtime is measured on the local host.}
\label{tab:replay-summary}
\end{table}

Table~\ref{tab:text-settings} gives the text-model settings and audit repetitions.
\begin{table}[ht]
\centering\small
\begin{tabular}{lr}
\toprule
Text-model or audit setting & Value\\
\midrule
Maximum features per view & 35,000\\
Maximum classifier iterations & 50\\
Classifier regularization & $10^{-5}$\\
Classifier convergence tolerance & $10^{-4}$\\
Recency decay factors & $.5,.8$\\
Candidate policies & 6\\
Audit draws per slot and setting & 100\\
\bottomrule
\end{tabular}
\caption{Text-model and audit settings.}
\label{tab:text-settings}
\end{table}

Table~\ref{tab:retained-gain} compares selected subsets with the full proposal.
Table~\ref{tab:deployed-gain} gives deployed gain under no-copy audits.
The corrected proposal has greater deployed gain at high report signal.
The scored balanced proposal has greater deployed gain at the intermediate signal.

\begin{table}[ht]
\centering\small
\begin{tabular}{lrr}
\toprule
Proposal & Retained revisions & Mean clean gain (pp)\\
\midrule
Full candidate & 5,987 & 7.34\\
Scored balanced & 2,236 & 2.26\\
Count-only balanced & 2,243 & 1.77\\
\bottomrule
\end{tabular}
\caption{Retained MultiWOZ revisions and mean clean gains.
The abbreviation pp denotes percentage points.}
\label{tab:retained-gain}
\end{table}

\begin{table}[ht]
\centering\small
\begin{tabular}{lrr}
\toprule
Reference signal & Scored balanced gain (pp) & Full corrected gain (pp)\\
\midrule
$.6$ & 2.23 & 0\\
$.9$ & 2.26 & 5.25\\
\bottomrule
\end{tabular}
\caption{Mean deployed gain under no-copy audits.
The abbreviation pp denotes percentage points.}
\label{tab:deployed-gain}
\end{table}

Table~\ref{tab:uci-diagnostics} retains the previously exposed diagnostics
\citep{eeg,occupancy}. The nominal test accepted a harmful electroencephalography
(EEG) revision. The unrestricted-channel certificate rejected that revision
and also rejected the beneficial occupancy revision.

\begin{table}[ht]
\centering\small
\begin{tabular}{lrrl}
\toprule
Dataset & Incumbent accuracy (\%) & Candidate accuracy (\%) & Outcome\\
\midrule
EEG & 70.69 & 37.07 & Harmful acceptance\\
Occupancy & 55.36 & 99.49 & Beneficial rejection\\
\bottomrule
\end{tabular}
\caption{Previously exposed time-series diagnostics.
EEG denotes electroencephalography.}
\label{tab:uci-diagnostics}
\end{table}

Table~\ref{tab:arithmetic-check} summarizes the conditional-binomial calculation.
\begin{table}[ht]
\centering\small
\begin{tabular}{lr}
\toprule
Calculation quantity & Value\\
\midrule
Specified laws & 27\\
Null or harmful laws & 18\\
Largest null or harmful acceptance probability & $1.16\times10^{-21}$\\
\bottomrule
\end{tabular}
\caption{Conditional-binomial acceptance calculation.}
\label{tab:arithmetic-check}
\end{table}

\fi

\end{document}